\theoremstyle{definition}
\newtheorem{definition}{Definition}
\newtheorem{theorem}{Theorem}
\newtheorem{corollary}{Corollary}
\definecolor{skyblue}{rgb}{0.53, 0.81, 0.92}
\newcommand{\cmmnt}[1]{\ignorespaces}
\newcommand{\algname}{ARCUS}
\renewcommand{\algorithmiccomment}[1]{/*~#1~*/}
\newcolumntype{L}[1]{>{\raggedright\let\newline\\\arraybackslash\hspace{0pt}}m{#1}}
\newcolumntype{C}[1]{>{\centering\arraybackslash}p{#1}}
\pgfplotsset{compat=1.3}
\pgfplotsset{filter discard warning=false}
  \providecommand\BibTeX{{%
    \normalfont B\kern-0.5em{\scshape i\kern-0.25em b}\kern-0.8em\TeX}}}
\begin{document}

\title{Adaptive Model Pooling for Online Deep Anomaly Detection from a Complex Evolving Data Stream}

\author{Susik Yoon}
\affiliation{%
  \institution{UIUC}
   \country{Illinois, USA}
}
\email{susik@illinois.edu}

\author{Youngjun Lee}
\affiliation{%
  \institution{KAIST}
  \country{Daejeon, Korea}
}
\email{youngjun.lee@kaist.ac.kr}

\author{Jae-Gil Lee}
\authornote{Corresponding author.}
\affiliation{%
  \institution{KAIST}
  \country{Daejeon, Korea}
}
\email{jaegil@kaist.ac.kr}

\author{Byung Suk Lee}
\affiliation{%
  \institution{University of Vermont}
  \country{Vermont, USA}
}
\email{bslee@uvm.edu}
\renewcommand{\shortauthors}{Yoon et al.}

\begin{abstract}
Online anomaly detection from a data stream is critical for the safety and security of many applications but is facing severe challenges due to \emph{complex and evolving} data streams from IoT devices and cloud-based infrastructures. Unfortunately, existing approaches fall too short for these challenges; online anomaly detection methods bear the burden of handling the complexity while offline deep anomaly detection methods suffer from the evolving data distribution. This paper presents a framework for online deep anomaly detection, \algname{}, which can be instantiated with any autoencoder-based deep anomaly detection methods. It handles the complex and evolving data streams using an \emph{adaptive model pooling} approach with two novel techniques---\emph{concept-driven inference} and \emph{drift-aware model pool update}; the former detects anomalies with a combination of models most appropriate for the complexity, and the latter adapts the model pool dynamically to fit the evolving data streams. In comprehensive experiments with ten data sets which are both high-dimensional and concept-drifted, \algname{} improved the anomaly detection accuracy of the streaming variants of state-of-the-art autoencoder-based methods and that of the state-of-the-art streaming anomaly detection methods by up to 22\% and 37\%, respectively. 
\end{abstract}

\begin{CCSXML}
<ccs2012>
<concept>
<concept_id>10010147.10010257.10010258.10010260.10010229</concept_id>
<concept_desc>Computing methodologies~Anomaly detection</concept_desc>
<concept_significance>500</concept_significance>
</concept>
<concept>
<concept_id>10002951.10003227.10003351.10003446</concept_id>
<concept_desc>Information systems~Data stream mining</concept_desc>
<concept_significance>500</concept_significance>
</concept>
</ccs2012>
\end{CCSXML}

\ccsdesc[500]{Computing methodologies~Anomaly detection}
\ccsdesc[500]{Information systems~Data stream mining}

\keywords{Anomaly detection; Data stream; Autoencoder; Concept drift; Model pooling}

\maketitle

\section{Introduction}
\label{sec:Introduction}
\subsection{Background and Motivation}
An anomaly can be identified as a data point that has different characteristics from a majority of other data points, and means a novel observation, a certain failure, unexpected noise, etc. in a system of interest. Anomaly detection has numerous real-world applications such as fraud detection in financial institutions and abnormality detection in healthcare devices\,\cite{DAD}

Today, a \emph{complex} data stream is common from IoT devices and cloud-based infrastructures, where data items with hundreds of features of often unknown correlations and heterogeneous data types are continuously arriving. This complexity brings about an insurmountable challenge to anomaly detection, thereby necessitating significant preprocessing or heavy model training to cope with the complexity. The challenge is aggravated when the data stream is \emph{evolving}, thereby making the preprocessing or the trained models outdated quickly. This phenomenon is often referred to as a \emph{concept drift}\,\cite{Lu18}, where properties of a target domain (i.e., concept) change arbitrarily. Such complex and evolving data streams are observed in various real-world situations\,\cite{RIALTO,GAS,INSECTS}; e.g., gas sensor value streams to monitor gas leaks with varying gas concentrations.


This paper concerns the computational method to deal with complex evolving data streams. \emph{Deep anomaly detection}\,\cite{DAD} based on a deep neural network has proven to handle the complexity effectively, better than classical methods (e.g., $k$ nearest neighbors\,\cite{Kno98})\,\cite{REPEN, Ruf20}. In particular, an autoencoder\,(AE) has been widely used, as it is appropriate for an \emph{unsupervised setting} that is natural for anomaly detection with rare labels. Existing state-of-the-art AE-based methods\,\cite{RAPP,RSRAE,DAGMM}, however, are designed for an offline setting and, thus, cannot effectively cope with evolving data streams. There are a few recurrent neural network\,(RNN)-based methods proposed for time series anomaly detection\,\cite{REBM, LSTM-ED, incrRNN1, incrRNN2}. However, they focus on learning temporal relationships inside local sequences and incrementally updating a single model, which are not suitable to handle arbitrarily evolving data streams. 

The goal of this paper, thus, is to provide a novel framework for \emph{online deep anomaly detection} that adopts the existing deep anomaly detection methods adaptively in an online setting, thereby effectively dealing with the complexity and evolution challenges of a data stream. 

\begin{figure*}[!t]
    \begin{subfigure}[b]{\columnwidth}
        \centering
        \includegraphics[width=\columnwidth]{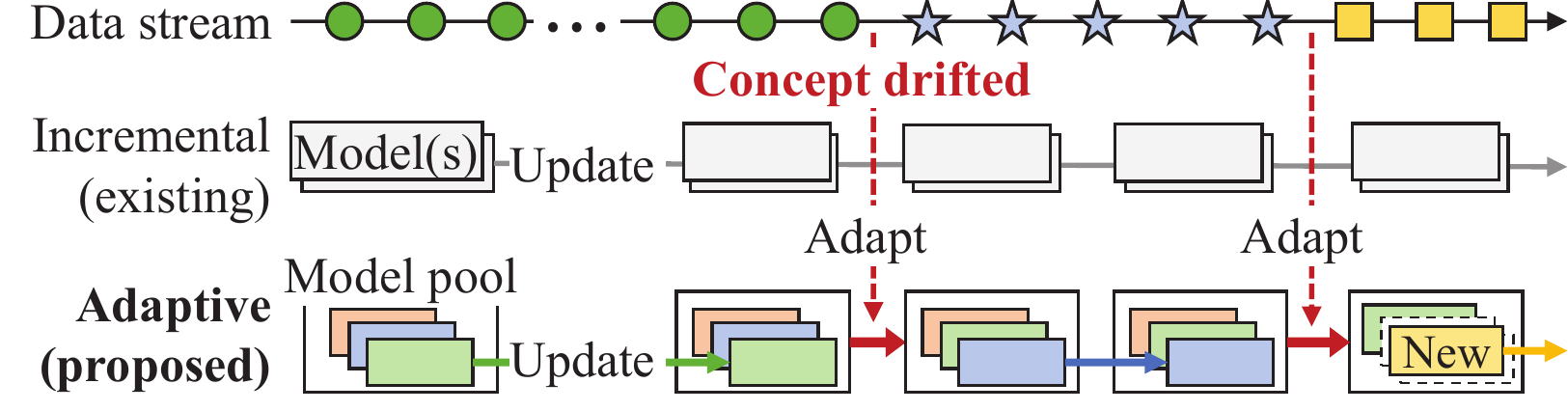}
        \caption{Processing flow of online deep anomaly detection. The marks on the timeline represent the concepts of data points.}
        \label{fig:main_idea}
    \end{subfigure}    
    \hfill
    \begin{subfigure}[b]{\columnwidth}
        \centering
        \includegraphics[width=\columnwidth]{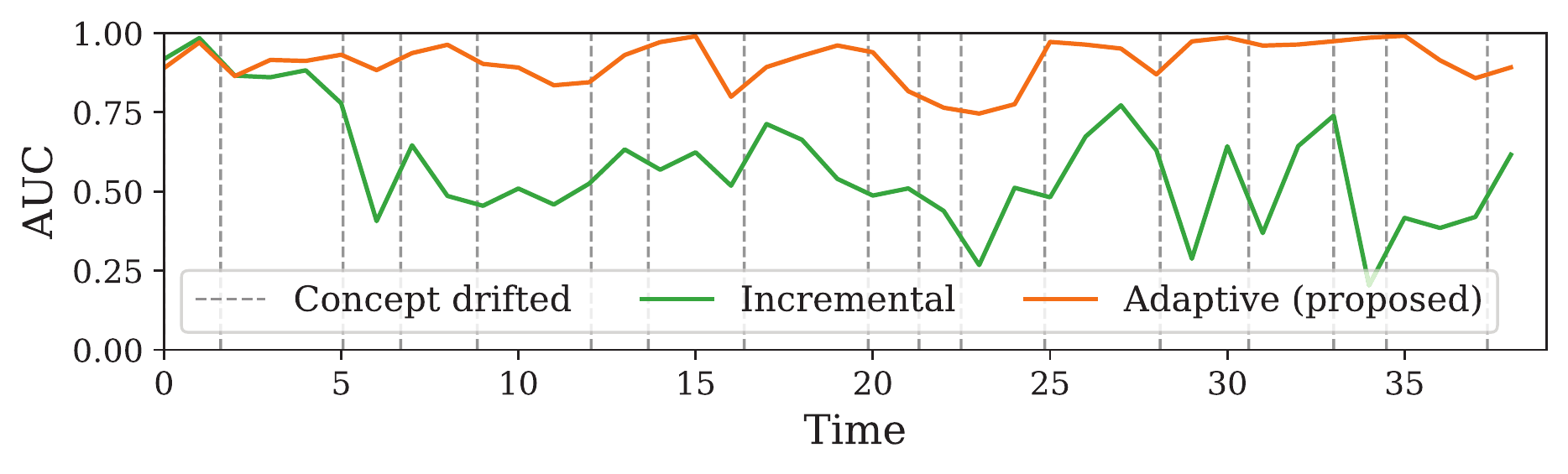}
        \vspace{-0.6cm}
        \caption{Detection accuracy in AUC (using an AE-based model for an MNIST data set simulated with abrupt and recurrent concept drifts).}
        \label{fig:main_idea_auc}
    \end{subfigure}    
    \vspace{-0.3cm}
    \caption{Comparison of incremental and adaptive approaches.}
    \vspace{-0.3cm}
\end{figure*}

\vspace*{-0.5em}
\subsection{Main Idea}
Undoubtedly, using a pre-trained fixed model or creating a new model repeatedly would not work at all to handle a complex evolving data stream; they are either too ineffective or too inefficient. A common approach of the existing streaming anomaly detection methods is to build an initial model and incrementally update the model over a data stream\,\cite{Yoon20, Yoon21, MSTREAM, Sud16} which can be easily applicable to deep anomaly detection methods. However, this \emph{incremental approach} adapts a model to only the latest data points, regardless of how data streams evolve. With arbitrary concept drifts in data streams, the incremental approach could be ineffective as it needs some time to perfectly adapt to new concepts and inefficient as it quickly forgets the previous concept that may reoccur in the future.

The main idea in this paper is to use \textbf{\emph{adaptive model pooling}}, which manages multiple assorted models in both inference over and adaptation to a complex evolving data stream. Faced with concept drifts involving the indeterminate number of multiple patterns, a fixed model or models cannot handle all of them. A model pooling approach thus allows multiple models to work together adaptively to handle multiple and time-varying concept drifts, thereby achieving versatile anomaly detection performance for a varying number of unexpected concept drifts. Unlike the existing ensemble approaches\,\cite{IF,AEE, Sud16} where the set of models is fixed in advance, the model pool membership is dynamically managed over time.

As illustrated in Figure \ref{fig:main_idea}, the incremental approach (top) updates a fixed model or models without regard to concept drifts, whereas the adaptive approach (bottom) uses a model pool and adjusts it in response to concept drifts---by using the best combination of existing models or creating a new model. As shown in Figure \ref{fig:main_idea_auc}, this adaptive model pooling brings a clear advantage in anomaly detection accuracy when arbitrary concept drifts occur.

This paper realizes the adaptive model pooling with \emph{autoencoder-based} deep anomaly detection models, which have shown state-of-the-art performances in unsupervised anomaly detection. Specifically, we propose a novel framework \textbf{\algname{}}\,(\ul{A}daptive framework fo\ul{R} online deep anomaly dete\ul{C}tion \ul{U}nder a complex evolving data \ul{S}tream), which employs two key techniques:

\begin{itemize}[leftmargin=10pt, noitemsep]
\vspace{-0.2cm}
\item \emph{\textul{Concept-driven inference}}: \algname{} calculates the anomaly scores of incoming data points using the best combination of models in the model pool. While optimizing individual models for different sets of data points, \algname{} estimates the reliability of each model against the given data points and decides how much each model contributes to the final anomaly score, to maximize the usability of the model pool in varying concepts.
\item \emph{\textul{Concept drift-aware update}}: \algname{} continuously monitors the reliability of the model pool for the incoming data points. When the model pool is assessed inadequate for the latest data points, presumably due to a concept drift, \algname{} updates the model pool to incorporate a new model optimized for the new concept of data points while keeping the model pool as compact as possible. This update enables \algname{} to efficiently keep the best performance regardless of the patterns of concept drifts.
\end{itemize}

\subsection{Highlights}\label{sec:highlights}
\begin{itemize}[leftmargin=10pt, noitemsep]
\item 
To the best of our knowledge, this is the first work that proposes an adaptive model pooling mechanism of deep anomaly detection models which addresses both the complexity and concept drift challenges of a data stream.
\item 
With the model pooling in place, this paper proposes a novel framework \textbf{\algname{}} equipped with concept-driven inference and concept drift-aware update. \algname{} can be implemented with any existing AE-based anomaly detection model. For reproducibility, the source code of \algname{} is publicly available\footnote{\url{https://github.com/kaist-dmlab/ARCUS}}.
\item 
Comprehensive experiments are conducted using ten data sets which are both high-dimensional and concept-drifted. \algname{}, when implemented using three state-of-the-art AE-based models, achieved up to $22\%$ higher anomaly detection accuracy than their streaming variants and surpassed the best accuracy results of the existing online anomaly detection algorithms by up to $37\%$.
\vspace{-0.1cm}
\end{itemize}

\section{Related work}
\label{sec:RelatedWork}

\subsection{Deep Anomaly Detection}
The recent rapid advancement of a deep neural network has led to many deep anomaly detection methods with various types of approaches (e.g., AE, RNN, or generative adversarial networks (GAN))\,\cite{DAD}. Among them, the autoencoder has been widely studied and achieved the state-of-the-art performances\,\cite{DAGMM, RSRAE, RAPP}, thanks to its unsupervised but effective capability to remove noisy or anomalous information in the input. DAGMM\,\cite{DAGMM} combines an AE with the Gaussian mixture model to detect anomalies by predicting the Gaussian mixture membership of a data instance in a low-dimensional representation obtained from an AE. RSRAE\,\cite{RSRAE} added a linear transformation layer after a latent space of an AE to learn the hidden linear structure of the non-linearly embedded data points by an encoder. RAPP\,\cite{RAPP} investigates hidden activation values of layers in an AE in the same way as calculating reconstruction errors to verify the information loss during encoding and decoding processes. Although these methods have shown high anomaly detection accuracy for static data sets, none of them is geared for data streams. Besides, it is worth mentioning that the AE-based ensemble method\,\cite{AEE} was proposed to use multiple AE models differentiated by random edge sampling and trained with adaptive data sampling for anomaly detection. In addition to that it is still designed for an offline setting, our AE-based adaptive model pooling is fundamentally different from the AE-based ensemble in that we manage the dynamically changing number of models which are selectively adapted to evolving data distributions.

\smallskip
\subsection{Streaming Anomaly Detection}
Popular anomaly detection methods based on different approaches, such as $k$ nearest neighbors (kNN), kernel density estimation (KDE), isolation forest (IF)\,\cite{IF}, and locality sensitive hashing (LSH), have been actively extended to work online on a data stream. The kNN-based streaming methods detect outliers based on queries (e.g., NETS\,\cite{Yoon19} with set-based update and MDUAL\,\cite{Yoon21} with data-query duality) or local outlier factor (e.g., MiLOF\,\cite{Sal16} with summarization and DILOF\,\cite{Na18} with sampling). The KDE-based method STARE\,\cite{Yoon20} employs stationary region skipping for updating densities and anomaly scores. The LSH-based method MStream\,\cite{MSTREAM} uses two hashing-based features accompanied with dimensionality reduction techniques including the AE. The IF-based method RRCF\,\cite{Sud16} manages an ensemble of decision trees with a sketching technique. While they are effective in handling evolving data streams to some extent by adopting window-based processing, their focus is on reducing the computational overhead of incremental updates of a model or a pre-fixed ensemble of models for the changing data distributions. Further, they fundamentally rely on manual feature engineering such as dimensionality reduction, random sub-sampling, and linear feature transformation to deal with complex data, which often leads to sub-optimal results and limited scalability\,\cite{REPEN, Ruf20}. In this work, \algname{} proactively adapts to evolving data streams with a dynamic model pool while being free of ad-hoc feature engineering thanks to an AE-based deep anomaly detection model.

\smallskip
\subsection{Online Deep Learning}
Online deep learning from a data stream is to manage a deep neural network-based model over a data stream for continuous inference and update. Model adaptation\,\cite{HBP, IADM} and incremental update\,\cite{incrRNN1,incrRNN2} are approaches popularly used in existing studies. In the model adaptation approach, the hedge backpropagation is used to determine the weight of each hidden layer when combining the outputs from all layers\,\cite{HBP}, and an attention model and a Fisher matrix are employed to adjust layer weights and exploit the previously trained weights\,\cite{IADM}. In the incremental update approach, adaptive gradient learning adjusts the weights of gradients in response to data distribution changes\,\cite{incrRNN1}, and local normalization handles statistical shifts in model outputs\,\cite{incrRNN2}. While these studies pioneered online deep learning, they focus on adapting a \emph{single} model to a data stream. Moreover, they are designed for supervised classification\,\cite{HBP, IADM} and prediction-based time series anomaly detection\,\cite{incrRNN1,incrRNN2}, which are outside the scope of this work.
\smallskip
\section{Preliminaries}
\label{sec:Preliminaries}

\subsection{Problem Setting}\label{sec:problem_setting}
\begin{figure}[!t]
    \centering
    \includegraphics[width=\columnwidth]{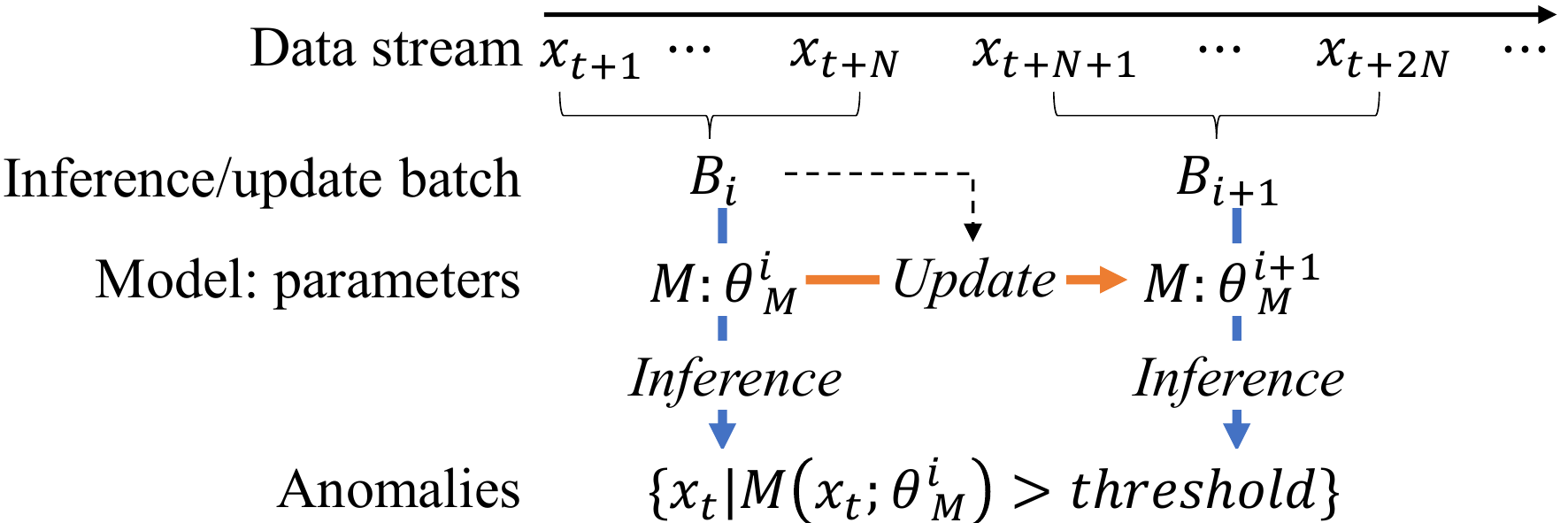}
    \vspace{-0.5cm}
    \caption{Batch-based continuous anomaly detection.}
    \vspace{-0.0cm}
    \label{fig:problem_setting}
\end{figure}

Given an unbounded sequence of data points $\langle\ldots$, $x_{t-1}$, $x_t$, $x_{t+1}$, $\ldots\rangle$ arriving in a data stream, an anomaly detection model $M$ with parameters $\theta_M$ calculates anomaly scores of the individual data points, $\langle\ldots$, $M(x_{t-1};\!\theta_M), M(x_t;\!\theta_M), M(x_{t+1};\!\theta_M)$, $\ldots\rangle$, continuously while updating the parameters $\theta_M$ unsupervised and reports the data points whose scores exceed a threshold as anomalies. Figure \ref{fig:problem_setting} illustrates this anomaly detection performed in streaming batch processing. A batch $B$ of data points newly coming from a data stream is used by $M$ for inference first and then used to update the parameters $\theta_M$ afterward, following the \emph{prequential evaluation} scheme\,\cite{Gam13} designed to evaluate an online stream learning algorithm by interleaving training and testing within the same batch. The inference returns a set of anomaly scores of the data points in the batch $B$, i.e., $\{ M(x_{i};\!\theta_M) \,|\, x_i \in B \}$, abbreviated as $M(B;\!\theta_M)$.

\subsection{Autoencoder-based Anomaly Detection}
\label{sec:autoencoder}
An autoencoder (AE) is a feed-forward neural network with a \emph{encoder} $E$ and a \emph{decoder} $D$, aiming at reconstructing an input as exactly as possible, i.e., $\text{min}_{E,D} \|X-D(Z)\|^2$ where $Z = E(X)$ is the latent representation of an input $X$. Typically, the reconstruction error for a given input $X$ is used as the anomaly score of $X$. Motivated by the latest AE-based anomaly detection models\,\cite{RDA, DAGMM, RSRAE, RAPP}, which have shown the state-of-the-art performances, we use an AE and its variants as a base anomaly detection model $M$ in this work. 

\subsection{Concept Drift}
As a concept refers to a certain distributional or statistical property of data in a domain, a \emph{concept drift} refers to an extrinsic phenomenon of the concept changing arbitrarily over time\,\cite{Lu18}. Formally, a concept drift occurs at time $t$ if the joint probability of input data points $X$ and their label $y$ changes at time $t$, that is, $P_t(X,y) \neq P_{t+1}(X,y)$. Since $P_t(X,y) = P_t(X)P_t(y|X)$, the source of such a concept drift can be identified as one of the following three: (i) $P_t(X) \neq P_{t+1}(X)$, i.e., change in the data distribution; (ii) $P_t(y|X) \neq P_{t+1}(y|X)$, i.e., change in the anomaly decision boundary; and (iii) both (i) and (ii). Once a concept drift occurs, the current anomaly detection model becomes obsolete and should be updated to learn the new concept. Since the drift can occur in various forms, such as ``sudden,'' ``gradual,'' ``incremental,'' and ``reoccurring''\,\cite{Lu18, INSECTS}, it is important to have a versatile mechanism to handle all these forms equally well. 

\section{The \algname{} Framework}
\label{sec:Framework}

\subsection{Overview}

\begin{figure}[!t]
    \centering
    \includegraphics[width=\columnwidth]{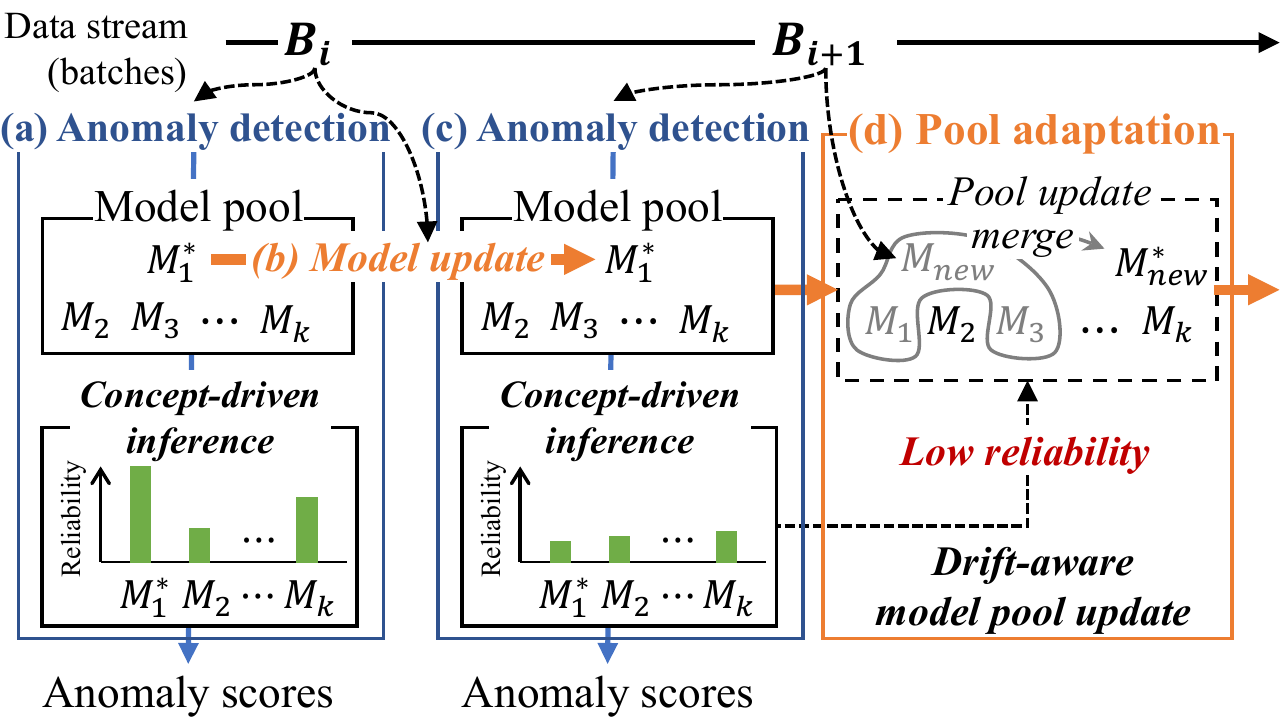}
    \vspace{-0.7cm}
    \caption{Overview of the procedure of \algname{}.}
    \vspace{-0.3cm}
    \label{fig:overall_framework}
\end{figure}

\begin{algorithm}[t!]
\begin{algorithmic}[1]
\small
\caption{\bf Overall Procedure of \algname{}}
\label{alg:overall}
\REQUIRE a data stream $DS$, an AE-based model $M$, a reliability threshold $\alpha$, a similarity threshold $\gamma$
\ENSURE anomaly scores $S$ of data points for each batch
\STATE Initialize a model pool $P$ with a model built from the first batch of $DS$;
\FOR{\textbf{each} batch $B$ of data points from $DS$}
  \STATE \algorithmiccomment {{\color{blue} {\sc 1. Anomaly Detection}}}
    \STATE Calculate the anomaly scores $S$ of data points in $B$ by $P$;
    \label{alg:detection}
    \STATE \algorithmiccomment {{\color{blue} {\sc 2. Model Pool Adaptation}}}
    \STATE Estimate the reliability $R_P$ of the model pool $P$; \label{alg:reliability}
    \IF{$R_P \geq \alpha$} \label{alg:model_update_begin}
        \STATE {{\color{blue}// Model update (i.e., minor update)}}
        \STATE Select the most reliable model $M^*$ in the model pool $P$;
        \STATE Perform incremental update on $M^*$ with $B$;
        \label{alg:model_update_end}
    \ELSE \label{alg:pool_update_begin}
        \STATE {{\color{blue}// Pool update (i.e., major update)}}
        \STATE Initialize a new model $M_{new}$ with $B$;
        \STATE Compact $P$ to $P^*$ by recursively merging $M_{new}$ with $M \in P$ whose similarity to $M_{new}$ exceeds $\gamma$;
    \ENDIF  \label{alg:pool_update_end}
    \RETURN anomaly scores $S$;\label{alg:return_anomaly_scores}  
\ENDFOR
\end{algorithmic}
\end{algorithm}
\newlength{\oldtextfloatsep}\setlength{\oldtextfloatsep}{\textfloatsep}
\setlength{\textfloatsep}{5pt}

\noindent
\algname{} is an online anomaly detection framework designed for any AE-based deep anomaly detection model. \algname{} manages a pool of models to perform inference over a batch of data stream, and then updates the model pool to adapt to new concepts detected in the batch. The overall procedure of \algname{} is illustrated in Figure \ref{fig:overall_framework} and outlined in Algorithm \ref{alg:overall}. Once a model pool is initialized with a model created for the first batch, \algname{} repeats, for every batch, anomaly detection using \emph{concept-driven inference} and model pool adaptation using \emph{concept drift-aware update}. The anomaly detection step calculates the anomaly scores of the data points in the current batch based on the reliability of individual models in the pool against the batch (Line \ref{alg:detection} and Figures \ref{fig:overall_framework}a and \ref{fig:overall_framework}c). The model pool adaptation step evaluates the overall reliability of the model pool against the current batch (Line \ref{alg:reliability}) and updates the model pool as needed (Lines \ref{alg:model_update_begin}$-$\ref{alg:pool_update_end}). Specifically, if the model pool fits well, \algname{} keeps the current model pool and updates only the model contributing most to the pool's reliability (Lines \ref{alg:model_update_begin}$-$\ref{alg:model_update_end} and Figure \ref{fig:overall_framework}b); otherwise, \algname{} creates a new model and then merges it with similar existing models (to keep the model pool as compact as possible) (Lines \ref{alg:pool_update_begin}$-$\ref{alg:pool_update_end} and Figure \ref{fig:overall_framework}d). \algname{} then returns the anomaly scores of the current batch (Line \ref{alg:return_anomaly_scores}). These two steps are discussed in detail in the following sections.


\smallskip
\subsection{Model Pooling}
\label{sec:model_and_model_pool}

A \emph{model} and a \emph{model pool} used in \algname{} are formalized as follows.

\begin{definition}
\label{def:model} 
({\sc Model})
A \emph{model} $M$ is represented by an AE with an encoder $E_M$, a decoder $D_M$, and any additional components needed by the specific AE. \hfill $\square$
\end{definition}


\begin{definition}
\label{def:model_pool} 
({\sc Model Pool})
A \emph{model pool} $P$ is a set of models $\{M_1, M_2, \ldots, M_k\}$ that share the same architecture but have different parameters $\theta_{M_i}$ learned from different input batches.
\hfill $\square$
\end{definition}

The membership of a model pool needs to find a balance between the efficacy and efficiency of anomaly detection. In one extreme, the pool may contain one dedicated model for every observed concept to achieve the highest accuracy while incurring significant overhead to keep all models. In the other extreme, the pool may contain only one model to minimize the update cost while sacrificing accuracy. \algname{} finds the necessary balance as adapting to data streams so that the accuracy considering all models in the pool is maximized while keeping the pool as compact as possible.

\subsection{Anomaly Detection}
\label{sec:anomaly_detection}

\subsubsection{\textbf{Model Reliability}}
The concept reflected in a given model may be different from the concept of the current batch, especially when a concept drift has occurred. \algname{} estimates the \emph{reliability} of a model by comparing the concept learned by the model with that of the current batch and uses it to calculate the concept-driven anomaly score.
A straightforward way to estimate the model reliability is to directly investigate a sequence of error rates reported by the model, as adopted in the existing studies\,\cite{Bif07, Fri14}. However, they deal with a supervised setting where the error rate of a model is reported immediately and, thus, become impractical when the true labels of anomalies are not available in this unsupervised online anomaly detection. Thus, we exploit a set $M(B;\theta_M)$ of anomaly scores returned by a model $M$ on a batch $B$, which can be obtained immediately during inference without additional work while reflecting the normal characteristics of the data points in the batch.

Let $M(B_{Curr};\theta_M)$ be the sets of anomaly scores on the current batch and $M(B_{Last};\theta_M)$ be those on the last batch used to update the model. Then, given the model $M$, the statistical significance of the difference between $M(B_{Curr};\theta_M)$ and $M(B_{Last};\theta_M)$ indicates how reliable the model could be to the current batch. To quantify such significance, we adopt the Hoeffding's Inequality\,\cite{Hoe94}-based mean difference bound (see Theorem \ref{theorem:hoeffding_bound}), which has been widely used for detecting statistical changes in streaming values\,\cite{Bif07, Fri14, Lu18}.

\begin{theorem}
\label{theorem:hoeffding_bound} 
({\sc Hoeffding's Inequality-based Mean Difference Bound}) \cite{Fri14}
Given independent random variables $X$ and $Y$ bounded by $[a_{min},a_{max}]$, the probability of the sample mean difference between $\overline{X}=\frac{1}{n}\sum_{i=1}^n X_i$ and $\overline{Y}=\frac{1}{m}\sum_{j=1}^m Y_j$ is bounded by
\begin{equation}
\small
\label{eq:hoeffding}
    Pr\{|\overline{X} - \overline{Y}| \geq \epsilon\} \leq e^{\frac{-2\epsilon^2}{ (n^{-1}+m^{-1})(a_{max}-a_{min})^2}}. \qed
\end{equation}
\end{theorem}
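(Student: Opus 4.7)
The plan is to reduce this two-sample statement to the standard one-sample Hoeffding Inequality for sums of independent bounded random variables. First, I would rewrite the sample mean difference as a single weighted sum by setting $Z_i = X_i / n$ for $i = 1, \ldots, n$ and $Z_{n+j} = -Y_j / m$ for $j = 1, \ldots, m$; then $\overline{X} - \overline{Y} = \sum_{i=1}^{n+m} Z_i$, each $Z_i$ is independent of the others, and the range of each $Z_i$ is controlled by the assumption that $X$ and $Y$ both lie in $[a_{min}, a_{max}]$.

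Next, under the natural null hypothesis that $X$ and $Y$ share the same expectation (which is precisely the setting in which failure of the bound signals a concept drift in the \algname{} reliability test), $E[\sum Z_i] = 0$, so the event $|\overline{X} - \overline{Y}| \geq \epsilon$ is exactly a deviation of $\sum Z_i$ from its mean by at least $\epsilon$. I would then invoke the general Hoeffding Inequality $\Pr\{|S - E[S]| \geq \epsilon\} \leq 2 \exp(-2\epsilon^2 / \sum_i (b_i - a_i)^2)$ applied to $S = \sum Z_i$.

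The remaining work is a routine range computation: each $X_i/n$ lies in an interval of width $(a_{max} - a_{min})/n$, contributing $n \cdot ((a_{max}-a_{min})/n)^2 = (a_{max}-a_{min})^2/n$, and similarly the $-Y_j/m$ terms contribute $(a_{max}-a_{min})^2/m$. Summing gives $(a_{max}-a_{min})^2(n^{-1} + m^{-1})$, which upon substitution into Hoeffding's bound yields the stated exponential form.

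The only real obstacle is reconciling the leading constant. The version quoted in the theorem omits the factor of $2$ that appears in the two-sided Hoeffding bound, so either the statement is implicitly a one-sided inequality on $\overline{X} - \overline{Y}$ (which is consistent with a reliability test that triggers only on deterioration of error in one direction), or the constant is absorbed into a tightened $\epsilon$ as in the sequential change-detection framing of the cited prior work. I would flag this interpretive point explicitly, but it neither affects the asymptotic behavior of the bound nor its downstream use in estimating model pool reliability.
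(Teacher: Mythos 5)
Your reduction to the one-sample Hoeffding inequality via the weighted variables $Z_i = X_i/n$ and $Z_{n+j} = -Y_j/m$ is the standard derivation of this two-sample bound and is correct, including the range computation giving $(a_{max}-a_{min})^2(n^{-1}+m^{-1})$; the paper itself offers no proof, importing the statement verbatim from \cite{Fri14}, so there is no in-paper argument to diverge from. Both caveats you flag are genuine features of the statement as quoted rather than gaps in your argument: the inequality only makes sense under the implicit assumption $E[X]=E[Y]$ (otherwise the left-hand side can be arbitrarily close to $1$ for small $\epsilon$), and the two-sided event $|\overline{X}-\overline{Y}|\geq\epsilon$ properly carries a leading factor of $2$ that the paper, following its source, drops.
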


Applying Theorem \ref{theorem:hoeffding_bound} to $M(B_{Curr};\theta_M)$ and $M(B_{Last};\theta_M)$ gives the statistical significance of their difference in Corollary \ref{cor:model_reliability}.

\begin{corollary}
\label{cor:model_reliability} 
({\sc Concept Difference Bound}) Let $X$ and $Y$ be the independent anomaly scores returned by a model $M$, and let $M(B_{Curr};\!\theta_M)$ and  $M(B_{Last};\!\theta_M)$ be the sets of anomaly scores sampled from $X$ and $Y$, respectively. Then,
\begin{equation}
\label{eq:model_reliability}
\small
\begin{split}
    Pr\{|\overline{X} - \overline{Y}| \geq \epsilon \} &\leq e^{\frac{-2\epsilon^2}{(b^{-1}+b^{-1})(s_{max}-s_{min})^2}} =e^{\frac{-b\epsilon^2}{(s_{max}-s_{min})^2}},
\end{split}
\end{equation}
where $b$ is the batch size (i.e., $b=|B_{Curr}|=|B_{Last}|$),
\begin{equation}
\label{eq:model_reliability_aux}
\small
\begin{split}
\epsilon &= |avg(M(B_{Curr};\!\theta_M)) - avg(M(B_{Last};\!\theta_M))| \\
s_{max} &= max(max(M(B_{Curr};\!\theta_M)), max(M(B_{Last};\!\theta_M))),\text{ and}\\
s_{min} &= min(min(M(B_{Curr};\!\theta_M)), min(M(B_{Last};\!\theta_M))).
\end{split}
\end{equation}
\end{corollary}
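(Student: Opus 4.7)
The plan is to derive Corollary 1 as a direct specialization of Theorem 1, instantiating the two independent samples as the anomaly-score sets returned by a single model $M$ on the current batch and on the batch last used to update that model. First, I would formalize the random variables: let $X$ denote the score assigned by $M$ to a data point drawn under the concept underlying $B_{Curr}$, and $Y$ the analogous score under the concept underlying $B_{Last}$. By the corollary's hypothesis the two score sets $M(B_{Curr};\theta_M)$ and $M(B_{Last};\theta_M)$ are independent samples of $X$ and $Y$, and because each batch has the same fixed size $b$, I set $n = m = b$ in (1).

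Second, I would fix a common support interval. Hoeffding's inequality requires almost-sure bounds on each random variable, so I take $[a_{min}, a_{max}] = [s_{min}, s_{max}]$ as defined in (3), i.e., the pooled empirical extrema over the union of the two observed score sets. Under this choice the sample means $\overline{X}$ and $\overline{Y}$ coincide with $\mathrm{avg}(M(B_{Curr};\theta_M))$ and $\mathrm{avg}(M(B_{Last};\theta_M))$, and $|\overline{X}-\overline{Y}|$ coincides with the quantity $\epsilon$ defined in (3).

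Third, plugging these choices into (1) yields
\begin{equation*}
Pr\{|\overline{X}-\overline{Y}| \geq \epsilon\} \;\leq\; \exp\!\left( \frac{-2\epsilon^2}{(b^{-1}+b^{-1})(s_{max}-s_{min})^2} \right),
\end{equation*}
and the elementary identity $b^{-1}+b^{-1} = 2/b$ collapses the exponent to $-b\epsilon^2/(s_{max}-s_{min})^2$, which is exactly (2). This last step is pure arithmetic and requires no further argument.

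The main obstacle is conceptual rather than technical: I must defend treating within-batch per-point scores as iid realizations of $X$ (resp.\ $Y$), since in an evolving stream the points inside a single batch may themselves fail to be strictly identically distributed, and using the pooled empirical extrema $s_{min}, s_{max}$ as the true support is an approximation of the finite but unknown theoretical range of the AE reconstruction errors. The standard resolution, already adopted by the drift-detection literature cited alongside Theorem 1, is to view each batch as representative of one prevailing concept and to treat the empirical score extrema as practical surrogates for the unknown support; modulo this modeling convention, the proof reduces to the one-line substitution above.
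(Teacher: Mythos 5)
Your proposal matches the paper's proof, which simply states that the corollary follows straightforwardly from Theorem~\ref{theorem:hoeffding_bound} by substituting $n=m=b$ and $[a_{min},a_{max}]=[s_{min},s_{max}]$; your explicit substitution and the arithmetic $b^{-1}+b^{-1}=2/b$ are exactly the intended one-line argument. Your added caveat about treating within-batch scores as i.i.d.\ and using empirical extrema as surrogates for the true support is a fair observation about the modeling convention, but it goes beyond what the paper addresses and is not needed to reproduce its proof.
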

\begin{proof}
Straightforward from Theorem \ref{theorem:hoeffding_bound}\,\cite{Fri14}.
\end{proof}

By using the probability bound (Eq.\ \eqref{eq:model_reliability}) in Corollary \ref{cor:model_reliability}, the \emph{reliability} of a single model is derived as in Definition \ref{def:model_reliability}. Thus, the probability that the sample mean difference of the anomaly scores by $M$ is higher than or equal to $\epsilon$ is at most $r_M$.

\begin{definition}
\label{def:model_reliability}
({\sc Model Reliability}) 
The \emph{model reliability} $r_M$ of $M$ for the current batch $B_{Curr}$ is defined as
\begin{equation}
r_M = e^{\frac{-b\epsilon^2}{(s_{max}-s_{min})^2}},
\end{equation}
where $\epsilon$, $s_{max}$, and $s_{min}$ are the same as those in Eq.\ \eqref{eq:model_reliability_aux}. \hfill $\Box$
\end{definition}
 
Note that deriving the model reliability is efficient as it requires only the lightweight anomaly score statistics (i.e., the triplet of $\langle min(M(B;\theta_M)),$ $max(M(B;\theta_M)),$ $avg(M(B;\theta_M)) \rangle$).

\smallskip
\subsubsection{\textbf{Concept-Driven Inference}}
The final anomaly score of a batch is determined by weighting the standardized anomaly scores of each model with its reliability (see Definition \ref{def:final_anomaly_score}), making the contribution of each model proportional to its reliability. 

\begin{definition}
\label{def:final_anomaly_score} 
({\sc Concept-driven Anomaly Score}) Given a set of models $\{M_1, \ldots, M_k\}$ in a model pool $P$ and the corresponding reliabilities $\{ r_{M_1}, \ldots, r_{M_k}\}$, the \emph{concept-driven anomaly score} $C_P(x)$ of a data point $x$ in the current batch $B$ is calculated as
\begin{equation}
\small
    C_P(x) = \sum_{i=1}^k r_{M_i}(\frac{M_i(x;\!\theta_{M_i}) - avg(M_i(B;\!\theta_{M_i}))}{std(M_i(B;\!\theta_{M_i}))}). \qed
\end{equation}
\end{definition}

\smallskip
\subsection{Model Pool Adaptation}
\label{sec:concept_drfit_adaptation}

\subsubsection{\textbf{Reliability of Model Pool}}
When a batch with a new concept that has never been seen arrives, none of the models in a model pool $P$  would do inference on the batch correctly. Thus, \algname{} estimates the overall reliability of a model pool to decide whether the model pool needs to be updated. 

By Definition \ref{def:model_reliability}, the probability that a model is not reliable for the current batch is $1-r_M$. At the same time, the models in $P$ are independent of one another since they have been separately updated with non-overlapping batches.
Then, the reliability of $P$ is defined by $1$ minus the probability that none of the models in $P$ are reliable, as formulated in Definition \ref{def:reliability_model_pool}.

\begin{definition}
\label{def:reliability_model_pool}
({\sc Model Pool Reliability}) Given a model pool $P = \{M_1,\ldots,M_k\}$, the \emph{reliability} $R_P$ of $P$ is $1-\prod_{i=1}^k(1-r_{M_i})$. \hfill $\square$
\end{definition}

\subsubsection{\textbf{Model Merging}}
The latent representation $Z$ learned by an AE-based model is expected to contain minimal but sufficient information of the input. If two models show similar latent representations of the same input, they must have been updated by the temporally separate disjoint batches of similar concepts, so merging them helps to remove redundancy in the model pool and also avoid overfitting. To this end, \algname{} uses the \emph{centered kernel alignment\,(CKA)} for measuring the similarity of two models (see Definition \ref{def:cka}), as it is known as the most appropriate similarity index for neural network representations since it is invariant to orthogonal transformation and isotropic scaling but not invariant to invertible linear transformation\,\cite{CKA}.

\bgroup
\def\arraystretch{0.9}%
\begin{table*}[t!]
\centering
\caption{High-dimensional and concept-drifting data streams used for evaluation.}
\vspace{-0.3cm}
\small
\label{tbl:datasets}
\begin{tabular}{cccccc} \toprule
Data set & Description & \# Obj & \# Dim & Concept drift type & Anomaly target \\ \midrule
MNIST-AbrRec & Handwritten digits & 20,480 & 784 & Abrupt and recurrent & Arbitrary digits  \\ 
MNIST-GrdRec & Handwritten digits & 20,480 & 784 & Gradual and recurrent & Arbitrary digits  \\ 
FMNIST-AbrRec & Fashion items & 20,480 & 784 & Abrupt and recurrent & Arbitrary fashion items  \\ 
FMNIST-GrdRec & Fashion items & 20,480 & 784 & Gradual and recurrent & Arbitrary fashion items  \\ 
\hdashline
INSECTS-Abr & Optical sensor values for insects & 52,848 & 33 & Abrupt & The southern house mosquito  \\ 
INSECTS-Inc & Optical sensor values for insects & 57,018 & 33 & Incremental & The southern house mosquito  \\ 
INSECTS-IncGrd & Optical sensor values for insects & 24,150 & 33 & Incremental and gradual & The southern house mosquito  \\ 
INSECTS-IncRec & Optical sensor values for insects & 79,986 & 33 & Incremental and recurrent & The southern house mosquito  \\ 
GAS & Gas sensor values & 13,910 & 128 & Unknown & Acetaldehyde  \\ 
RIALTO & Building images around a bridge & 82,250 & 27 & Unknown & The building class 0 \\ \bottomrule
\end{tabular}
\vspace{-0.1cm}
\end{table*}

\begin{definition}
\label{def:cka} 
({\sc Model Similarity}) Given an input $X$ and two models $M_1$ and $M_2$, let $Z_1$ and $Z_2$ be the latent representations of $X$ by $M_1$ and $M_2$, respectively. For a kernel $\mathcal{K}$, let $K_{ij}^{Z}$ be $\mathcal{K}(z_i,z_j)$ for $z_i, z_j \in Z$. Then, the \emph{similarity} between $Z_1$ and $Z_2$, $Sim(Z_1,Z_2)$, is calculated as
\begin{equation}
\small
\label{eq:CKA}
    CKA(K^{Z_1},K^{Z_2}) =  \frac{HSIC(K^{Z_1},K^{Z_2})}{\sqrt{HSIC(K^{Z_1},K^{Z_1})HSIC(K^{Z_2},K^{Z_2})}},
\end{equation}
where $HSIC(K^{Z_1},K^{Z_2})$ is Hilbert-Schmidt Independence Criterion \cite{HSIC}. We used a linear kernel $\mathcal{K}$, which is simple but comparable with other kernels\,\cite{CKA}. Then, Eq.~\eqref{eq:CKA} becomes equivalent to
\begin{align*}
\small
\|{Z_1}^TZ_2\|^2_F/(\|{Z_1}^TZ_1\|_F\|{Z_2}^TZ_2\|_F),
\end{align*}
where $\|\cdot\|_F$ denotes the Frobenius norm.\hfill $\square$
\end{definition}

The merging of two models $M_1$ and $M_2$ is conducted by weighted averaging of their parameters (see Definition \ref{def:model_merge}). In federated learning, when local models with the same initialization of parameters are optimized via stochastic gradient descent, parameter averaging has been proven to be equivalent to gradient averaging and converging to the global model\,\cite{Li19, Mcm17}. Thus, an AE-based anomaly detection model merged from two models trained with two temporally separate disjoint batches of the same concept is guaranteed to \emph{converge} to a model trained with the entire data set of the concept.


\smallskip
\begin{definition}
\label{def:model_merge} 
({\sc Model Merging}) Given two models $M_1$ and $M_2$, their number of batches $N_{M_1}$ and $N_{M_2}$ used to update the models, and their parameters $\theta_{M_1}$ and $\theta_{M_2}$, the \emph{merged model} is defined to have the parameters 
\begin{equation}
\small
    \theta_{M_{merged}} = (N_{M_1}\theta_{M_1} + N_{M_2}\theta_{M_2})/(N_{M_1}+N_{M_2}). \qed
\end{equation}
\end{definition}

\smallskip
\subsubsection{\textbf{Drift-Aware Model Pool Update}}
\label{sec:drift_aware_pool_update}
\algname{} monitors the reliability of a model pool and triggers the update of the pool with a significance level $1-\alpha$. A model pool will be kept unchanged when it has at least a single highly reliable model\,(i.e., $r_M > \alpha$), but it will be adjusted when the models in the pool have only neutral reliability values\,(i.e., $r_M \ll \alpha$). We set the default value of $\alpha$ to 0.95, which is commonly used in the statistical significance test, meaning that only $5\%$ of the possibility that all models are not reliable will be allowed, and also confirmed to be valid in the sensitivity analysis in Section \ref{sec:sensitivity}.

Once the update of the model pool is triggered, \algname{} first creates a new model with the current batch and derives a compact model pool (see Definition \ref{def:compact_model_pool}) in a greedy way, by recursively merging the new model with the most similar model exceeding a similarity threshold $\gamma$. 

\begin{definition}
\label{def:compact_model_pool} 
({\sc Compact Model Pool}) Given a model pool $P$, a new model $M_{new}$ with a current batch $B$, and a similarity threshold $\gamma$, a \emph{compact model pool} $P^*$ satisfies
\begin{equation}
\small
\label{eq:compact_model_pool}
    \begin{gathered}
    \min_{P^*} |P^*|\\
    ~\text{s.t.}~ Sim(Z_{M_{new}}^B, Z_{M_i}^B) < \gamma ~\text{for every~} M_i \in P^* ~\text{and}~M_i \neq M_{new}, \\
    \text{where}~~ Z_{M_{new}}^B = E_{M_{new}}(B)~~\text{and}~~Z_{M_i}^B = E_{M_{i}}(B). \qed
    \end{gathered}
\end{equation}
\end{definition}

We set the default value of $\gamma$ in $[0,1]$ to 0.8 to allow both the diversity\,(i.e., $\gamma>0.5$) and the compactness\,(i.e., $\gamma<1$) of a model pool. We have confirmed that the similarity of models trained under the same concept is usually higher than 0.8 (in Figure \ref{fig:cdf_similarity} in Appendix \ref{apdx:algorithms}), which was also valid in the sensitivity analysis in Section \ref{sec:sensitivity}.

\subsection{Complexity Analysis}
\label{sec:complexity_analysis}

\begin{theorem}
\label{theorem:time_complexity} 
Given the batch size $b$, the number $e$ of epochs, the number $k$ of models in a model pool, and the model parameter size $T$, the time complexity of \algname{} is $O(b^2 + bT)$ and the space complexity of \algname{} is $O(kT)$.
\end{theorem}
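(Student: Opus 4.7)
The plan is to account for the per-batch cost of each step in Algorithm~\ref{alg:overall} separately, then sum the costs and simplify by treating the hyperparameters $e$ and $k$ as constants bounded by the compact-pool mechanism of Section~\ref{sec:drift_aware_pool_update}. For the space bound, I will only need to account for what ARCUS stores persistently between batches versus what can be discarded once the batch has been processed.

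For the time complexity, I would decompose the work done per batch into four parts. First, the anomaly-detection step (Line~\ref{alg:detection}) performs a forward pass of each of the $k$ autoencoders in the pool on a batch of size $b$; since a forward pass of an AE with parameter size $T$ on a single input is $O(T)$, this contributes $O(kbT)$. Second, estimating per-model reliability (Definition~\ref{def:model_reliability}) only needs $\min$, $\max$, and $avg$ over the $b$ scores each model produced, i.e.\ $O(kb)$, and aggregating into $R_P$ (Definition~\ref{def:reliability_model_pool}) and $C_P(x)$ (Definition~\ref{def:final_anomaly_score}) is $O(k)$ and $O(kb)$ respectively. Third, the minor update (Lines~\ref{alg:model_update_begin}--\ref{alg:model_update_end}) performs $e$ training epochs of one AE on the batch, giving $O(ebT)$. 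Fourth, the major update (Lines~\ref{alg:pool_update_begin}--\ref{alg:pool_update_end}) creates a new model in $O(ebT)$, then invokes the CKA similarity of Definition~\ref{def:cka} against each existing model; using the kernel formulation $\|Z_1^\top Z_2\|_F^2$ with latent matrices of $b$ rows, each pairwise similarity takes $O(b^2)$ (absorbing the constant latent dimension), so all similarity checks cost $O(kb^2)$, and each merge (Definition~\ref{def:model_merge}) is a parameter average in $O(T)$. Summing yields $O(ebT + kbT + kb^2)$, and treating $e$ and $k$ as bounded constants (the compact-pool condition of Definition~\ref{def:compact_model_pool} prevents $k$ from growing unboundedly, because any new model sufficiently similar to an existing one is absorbed) collapses this to $O(bT + b^2)$.

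For the space complexity, the persistent state consists of (i) the model pool itself, which stores $k$ AEs with $T$ parameters each for $O(kT)$, plus (ii) the triplet $\langle \min, \max, avg\rangle$ of scores from the last update batch for each model, $O(k)$. The transient state during one batch consists of the batch itself and its latent representations, $O(b \cdot \mathrm{dim})$, plus score arrays of length $b$ for each model, $O(kb)$; none of this persists across batches and, under the standard assumption that $b$ and the input dimension are constants relative to $T$, these are dominated. Summing, the total space is $O(kT)$.

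The main obstacle I anticipate is justifying that the effective $k$ and $e$ can be absorbed into the constant. For $e$ this is standard (it is a fixed training hyperparameter), but for $k$ I would need to invoke the compactness guarantee of Definition~\ref{def:compact_model_pool}: because any newly created model that is $\gamma$-similar to an existing one is recursively merged, the pool size is bounded by the number of genuinely distinct concepts observed in the stream, which is treated as a small constant in the ARCUS design. The second subtlety is the CKA cost; I would note explicitly that by using the linear-kernel reformulation $\|Z_1^\top Z_2\|_F^2 / (\|Z_1^\top Z_1\|_F \|Z_2^\top Z_2\|_F)$ from Definition~\ref{def:cka}, one avoids forming the full $b\times b$ Gram matrices and instead computes $d\times d$ products in $O(bd^2)$, or equivalently $O(b^2)$ after absorbing constants, so the $b^2$ term in the bound is tight rather than loose.
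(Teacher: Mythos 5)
Your decomposition matches the paper's proof essentially line for line: $O(kbT)$ for the inference step, $O(ebT + kb^2 + kT)$ for the pool-adaptation step (model update, CKA similarities, merging), collapsed to $O(b^2 + bT)$ under the assumption $b,T \gg e,k$, and $O(kT) + O(k)$ for the persistent state. The only differences are cosmetic---the paper simply asserts $b,T \gg e,k$ where you appeal to the compact-pool mechanism to bound $k$, and your closing remark that the linear-kernel reformulation makes the $b^2$ term ``tight'' is slightly off (computing $Z_1^{\top}Z_2$ directly costs $O(bd^2)$, i.e.\ linear in $b$ for fixed latent dimension $d$; the $b^2$ arises from forming the $b\times b$ Gram matrices), but this does not affect the stated upper bound.
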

\vspace*{-0.3cm}  
\begin{proof}
The time complexity for the anomaly detection step is $O(kbT)$ and that for the concept drift adaptation step is $O(ebT + kb^2 + kT)$ where is $O(ebT)$ for updating a model, $O(kb^2)$ is for calculating CKA-based similarities, and $O(kT)$ is for merging models. Since typically $b,T \gg e, k$, the total time complexity is $O(b^2 + bT)$. The space complexity for managing models is $O(kT)$ and that for managing anomaly score statistics is $O(k)$, and thus the total space complexity is $O(kT)$.
\end{proof}
\section{Experiments}
\label{sec:Experiments}

We conducted thorough experiments to evaluate the performance of \algname{}. The results are summarized as follows.
\begin{itemize}[leftmargin=10pt, noitemsep]
    \vspace{-0.1cm}  
    \item In the \emph{ten} high-dimensional and concept-drifted data sets, \algname{} outperforms the \emph{ten} state-of-the-art streaming anomaly detection algorithms with different approaches in terms of online anomaly detection accuracy (Section \ref{sec:overall_comparsion}) .
    \item \algname{} is able to detect exactly and adapt promptly to various types of real concept drifts (Section \ref{sec:drift_awareness}).
    \item The two main techniques employed in \algname{} are both highly effective to improve the accuracy (Section \ref{sec:ablation}).
    \item \algname{} is efficient and scalable with respect to varying input data rates and concept drift types (Section \ref{sec:efficiency_scalability}).
    \item \algname{} is robust to the variation of its own hyperparameter values, and their default values are valid (Section \ref{sec:sensitivity}).
    \vspace{-0.1cm}
\end{itemize}

\subsection{Experiment Setup}

\subsubsection{\textbf{Data Sets}} 
Table \ref{tbl:datasets} shows the summarized descriptions of ten \emph{high-dimensional}\,(complex) and \emph{concept-drifting}\,(evolving) benchmark data sets commonly used in other relevant literature as well. The synthetic data sets are generated from MNIST\,\cite{MNIST} and FMNIST\,\cite{F-MNIST} by simulating different concept drift types and durations, as widely used in high-dimensional anomaly detection and data stream classification\,\cite{RSRAE, HBP}. For the real data sets, the concept-drift types and durations are known in INSECTS\,\cite{INSECTS} but not in GAS\,\cite{GAS} and RIALTO\,\cite{RIALTO}. For all data sets, the ratio of anomalies over all data points was set to $1\%$. Refer to Appendix \ref{apdx:datasets} for more details.

\bgroup
\def\arraystretch{0.4}%
\begin{table*}
\center
\small
\caption{Overall performance comparison. The highest AUC results in each data set are marked in underlined bold.}
\vspace{-0.3cm}
\label{tbl:overall_performance}
\begin{tabular}[c]{@{}C{1.2cm}C{2.2cm}|C{1.0cm}C{1.0cm}C{1.2cm}|C{1.38cm}C{1.10cm}C{0.8cm}C{0.9cm}C{0.9cm}C{0.9cm}C{1.0cm}@{}}
\toprule
\multicolumn{1}{c}{} &\multicolumn{1}{c}{} & \multicolumn{3}{c}{\textbf{\algname{}$_{\{\emph{\text{base model}}\}}$}}
& \multicolumn{7}{c}{Streaming anomaly detection algorithms}     \\ 
\multicolumn{1}{l}{}  &  Data set    &$\textit{RAPP}$ & $\textit{RSRAE}$   & $\textit{DAGMM}$    & sLSTM-ED   & sREBM  & STARE & RRCF & MiLOF & DILOF & MStream  \\ \toprule
\multirow{11.5}{*}{{Synthetic}}                                                         & \begin{tabular}[c]{@{}c@{}}MNIST-AbrRec\end{tabular}         & \begin{tabular}[c]{@{}c@{}}\textbf{\underline{0.909}}\\ ($\pm0.004$)\end{tabular} & \begin{tabular}[c]{@{}c@{}}0.831\\ ($\pm0.003$)\end{tabular} & \begin{tabular}[c]{@{}c@{}}0.747\\ ($\pm0.009$)\end{tabular} & \begin{tabular}[c]{@{}c@{}}0.662\\ ($\pm0.006$)\end{tabular} & \begin{tabular}[c]{@{}c@{}}0.581\\ ($\pm0.001$)\end{tabular} &
\begin{tabular}[c]{@{}c@{}}0.574\\ ($\pm0$)\end{tabular}
&
\begin{tabular}[c]{@{}c@{}}0.711\\ ($\pm0.006$)\end{tabular} &
\begin{tabular}[c]{@{}c@{}}0.460\\ ($\pm0$)\end{tabular} &  \begin{tabular}[c]{@{}c@{}}0.655\\ ($\pm0$)\end{tabular} & \begin{tabular}[c]{@{}c@{}}0.491\\ ($\pm0.002$)\end{tabular}
\\ \cdashline{2-12}
& \begin{tabular}[c]{@{}c@{}}FMNIST-AbrRec\end{tabular}       & \begin{tabular}[c]{@{}c@{}}\textbf{\underline{0.806}}\\ ($\pm0.001$)\end{tabular} & \begin{tabular}[c]{@{}c@{}}0.743\\ ($\pm0.006$)\end{tabular} & \begin{tabular}[c]{@{}c@{}}0.657\\ ($\pm0.004$)\end{tabular} & \begin{tabular}[c]{@{}c@{}}0.772\\ ($\pm0.004$)\end{tabular} & \begin{tabular}[c]{@{}c@{}}0.603\\ ($\pm0.003$)\end{tabular} &
\begin{tabular}[c]{@{}c@{}}0.576\\ ($\pm0$)\end{tabular}
&
\begin{tabular}[c]{@{}c@{}}0.713\\ ($\pm0.013$)\end{tabular} &
\begin{tabular}[c]{@{}c@{}}0.434\\ ($\pm0$)\end{tabular} & \begin{tabular}[c]{@{}c@{}}0.513\\ ($\pm0$)\end{tabular} & \begin{tabular}[c]{@{}c@{}}0.717\\ ($\pm0.002$)\end{tabular} \\  \cdashline{2-12}

& \begin{tabular}[c]{@{}c@{}}MNIST-GrdRec\end{tabular}       & \begin{tabular}[c]{@{}c@{}}\textbf{\underline{0.904}}\\ ($\pm0.011$)\end{tabular} & \begin{tabular}[c]{@{}c@{}}0.784\\ ($\pm0.009$)\end{tabular} & \begin{tabular}[c]{@{}c@{}}0.707\\ ($\pm0.002$)\end{tabular} &  \begin{tabular}[c]{@{}c@{}}0.622\\ ($\pm0.003$)\end{tabular} & \begin{tabular}[c]{@{}c@{}}0.502\\ ($\pm0.002$)\end{tabular} &
\begin{tabular}[c]{@{}c@{}}0.574\\ ($\pm0$)\end{tabular}
&
\begin{tabular}[c]{@{}c@{}}0.660\\ ($\pm0.007$)\end{tabular} &
\begin{tabular}[c]{@{}c@{}}0.460\\ ($\pm0$)\end{tabular} & \begin{tabular}[c]{@{}c@{}}0.649\\ ($\pm0$)\end{tabular} & \begin{tabular}[c]{@{}c@{}}0.632\\ ($\pm0.004$)\end{tabular} \\ \cdashline{2-12}

& \begin{tabular}[c]{@{}c@{}}FMNIST-GrdRec\end{tabular}       & \begin{tabular}[c]{@{}c@{}}\textbf{\underline{0.783}}\\ ($\pm0.012$)\end{tabular} & \begin{tabular}[c]{@{}c@{}}0.682\\ ($\pm0.004$)\end{tabular} & \begin{tabular}[c]{@{}c@{}}0.652\\ ($\pm0.004$)\end{tabular} & \begin{tabular}[c]{@{}c@{}}0.630\\ ($\pm0.008$)\end{tabular} & \begin{tabular}[c]{@{}c@{}}0.485\\ ($\pm0.003$)\end{tabular} & \begin{tabular}[c]{@{}c@{}}0.566\\ ($\pm0$)\end{tabular}
&
\begin{tabular}[c]{@{}c@{}}0.730\\ ($\pm0.010$)\end{tabular} &
\begin{tabular}[c]{@{}c@{}}0.494\\ ($\pm0$)\end{tabular} & \begin{tabular}[c]{@{}c@{}}0.510\\ ($\pm0$)\end{tabular} & \begin{tabular}[c]{@{}c@{}}0.497\\ ($\pm0.003$)\end{tabular} \\ \midrule

\multirow{11.5}{*}{\begin{tabular}[c]{@{}c@{}} Real\\ (known \\drifts)\end{tabular}}   & \begin{tabular}[c]{@{}c@{}}INSECTS-Abr\end{tabular}    & \begin{tabular}[c]{@{}c@{}}0.631\\ ($\pm0.009$)\end{tabular} & \begin{tabular}[c]{@{}c@{}}\textbf{\underline{0.814}}\\ ($\pm0.006$)\end{tabular} & \begin{tabular}[c]{@{}c@{}}0.652\\ ($\pm0.018$)\end{tabular} & \begin{tabular}[c]{@{}c@{}}0.749\\ ($\pm0.008$)\end{tabular} & \begin{tabular}[c]{@{}c@{}}0.471\\ ($\pm0.001$)\end{tabular} & \begin{tabular}[c]{@{}c@{}}0.555\\ ($\pm0$)\end{tabular}
&
\begin{tabular}[c]{@{}c@{}}0.695\\ ($\pm0.018$)\end{tabular} &
\begin{tabular}[c]{@{}c@{}}0.393\\ ($\pm0$)\end{tabular} & \begin{tabular}[c]{@{}c@{}}0.730\\ ($\pm0$)\end{tabular} & \begin{tabular}[c]{@{}c@{}}0.709\\ ($\pm0.015$)\end{tabular} \\ \cdashline{2-12}


   & \begin{tabular}[c]{@{}c@{}}INSECTS-Inc\end{tabular}    & \begin{tabular}[c]{@{}c@{}}0.600\\ ($\pm0.004$)\end{tabular} & \begin{tabular}[c]{@{}c@{}}\textbf{\underline{0.794}}\\ ($\pm0.001$)\end{tabular} & \begin{tabular}[c]{@{}c@{}}0.572\\ ($\pm0.034$)\end{tabular}  &  \begin{tabular}[c]{@{}c@{}}0.696\\ ($\pm0.004$)\end{tabular} & \begin{tabular}[c]{@{}c@{}}0.383\\ ($\pm0.001$)\end{tabular} & 
\begin{tabular}[c]{@{}c@{}}0.559\\ ($\pm0$)\end{tabular}
&
   \begin{tabular}[c]{@{}c@{}}0.669\\ ($\pm0.011$)\end{tabular} &
   \begin{tabular}[c]{@{}c@{}}0.415\\ ($\pm0$)\end{tabular} & \begin{tabular}[c]{@{}c@{}}0.757\\ ($\pm0$)\end{tabular} & \begin{tabular}[c]{@{}c@{}}0.593\\ ($\pm0.001$)\end{tabular}
   \\ \cdashline{2-12}
   
   & \begin{tabular}[c]{@{}c@{}}INSECTS-IncGrd\end{tabular} & \begin{tabular}[c]{@{}c@{}}0.641\\ ($\pm0.040$)\end{tabular} & \begin{tabular}[c]{@{}c@{}}\textbf{\underline{0.845}}\\ ($\pm0.002$)\end{tabular} & \begin{tabular}[c]{@{}c@{}}0.658\\ ($\pm0.028$)\end{tabular} & \begin{tabular}[c]{@{}c@{}}0.795\\ ($\pm0.005$)\end{tabular} & \begin{tabular}[c]{@{}c@{}}0.575\\ ($\pm0.015$)\end{tabular} & 
   \begin{tabular}[c]{@{}c@{}}0.594\\ ($\pm0$)\end{tabular}
   &
   \begin{tabular}[c]{@{}c@{}}0.719\\ ($\pm0.032$)\end{tabular} &
   \begin{tabular}[c]{@{}c@{}}0.395\\ ($\pm0$)\end{tabular} & \begin{tabular}[c]{@{}c@{}}0.746\\ ($\pm0$)\end{tabular} & \begin{tabular}[c]{@{}c@{}}0.628\\ ($\pm0.001$)\end{tabular}
   \\ \cdashline{2-12}

   & \begin{tabular}[c]{@{}c@{}}INSECTS-IncRec\end{tabular} & \begin{tabular}[c]{@{}c@{}}0.634\\ ($\pm0.012$)\end{tabular} & \begin{tabular}[c]{@{}c@{}}\textbf{\underline{0.811}}\\ ($\pm0.001$)\end{tabular}  & \begin{tabular}[c]{@{}c@{}}0.667\\ ($\pm0.001$)\end{tabular}  & \begin{tabular}[c]{@{}c@{}}0.709\\ ($\pm0.005$)\end{tabular} & \begin{tabular}[c]{@{}c@{}}0.491\\ ($\pm0.006$)\end{tabular} & 
   \begin{tabular}[c]{@{}c@{}}0.551\\ ($\pm0$)\end{tabular}
   &
   \begin{tabular}[c]{@{}c@{}}0.680\\ ($\pm0.003$)\end{tabular} &
   \begin{tabular}[c]{@{}c@{}}0.381\\ ($\pm0$)\end{tabular} & \begin{tabular}[c]{@{}c@{}}0.743\\ ($\pm0$)\end{tabular} & \begin{tabular}[c]{@{}c@{}}0.637\\ ($\pm0.001$)\end{tabular} \\ \midrule


\multirow{1}{*}{\vspace*{1.0cm}\begin{tabular}[c]{@{}c@{}} Real\\(unknown \\ drift) \end{tabular}} & GAS     & \begin{tabular}[c]{@{}c@{}}\textbf{\underline{0.878}}\\ ($\pm0.008$)\end{tabular} & \begin{tabular}[c]{@{}c@{}}0.573\\ ($\pm0.017$)\end{tabular} & \begin{tabular}[c]{@{}c@{}}0.545\\ ($\pm0.039$)\end{tabular} & \begin{tabular}[c]{@{}c@{}}0.408\\ ($\pm0.005$)\end{tabular} & \begin{tabular}[c]{@{}c@{}}0.506\\ 
($\pm0.002$)\end{tabular} & 
\begin{tabular}[c]{@{}c@{}}0.635\\ ($\pm0$)\end{tabular}
&
\begin{tabular}[c]{@{}c@{}}0.804\\ ($\pm0.010$)\end{tabular} &
\begin{tabular}[c]{@{}c@{}}0.589\\ ($\pm0$)\end{tabular} & \begin{tabular}[c]{@{}c@{}}0.470\\ ($\pm0$)\end{tabular} & \begin{tabular}[c]{@{}c@{}}0.480\\ ($\pm0.001$)\end{tabular} \\ \cdashline{2-12}

   & RIALTO        & \begin{tabular}[c]{@{}c@{}}\textbf{\underline{0.784}}\\ ($\pm0.015$)\end{tabular} & \begin{tabular}[c]{@{}c@{}}0.683\\ ($\pm0.020$)\end{tabular} & \begin{tabular}[c]{@{}c@{}}0.562\\ ($\pm0.020$)\end{tabular} & \begin{tabular}[c]{@{}c@{}}0.617\\ ($\pm0.010$)\end{tabular}  & \begin{tabular}[c]{@{}c@{}}0.492\\ ($\pm0.001$)\end{tabular} & 
   \begin{tabular}[c]{@{}c@{}}0.532\\ ($\pm0$)\end{tabular}
   &
   \begin{tabular}[c]{@{}c@{}}0.731\\ ($\pm0.003$)\end{tabular} &
   \begin{tabular}[c]{@{}c@{}}0.456\\ ($\pm0$)\end{tabular} & \begin{tabular}[c]{@{}c@{}}0.742\\ ($\pm0$)\end{tabular} & \begin{tabular}[c]{@{}c@{}}0.699\\ ($\pm0.001$)\end{tabular} \\ \bottomrule
\end{tabular}
\vspace{-0.1cm}
\end{table*}
\egroup
\begin{figure*}[!t]
    \begin{subfigure}[b]{\textwidth}
        \centering
        \includegraphics[width=0.9\textwidth]{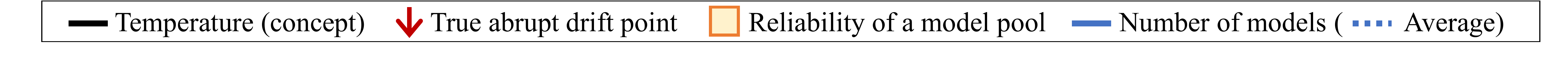}
        \vspace{-0.2cm}
        
    \end{subfigure}
    
    \begin{subfigure}[b]{0.24\textwidth}
        \centering
        \includegraphics[width=\textwidth]{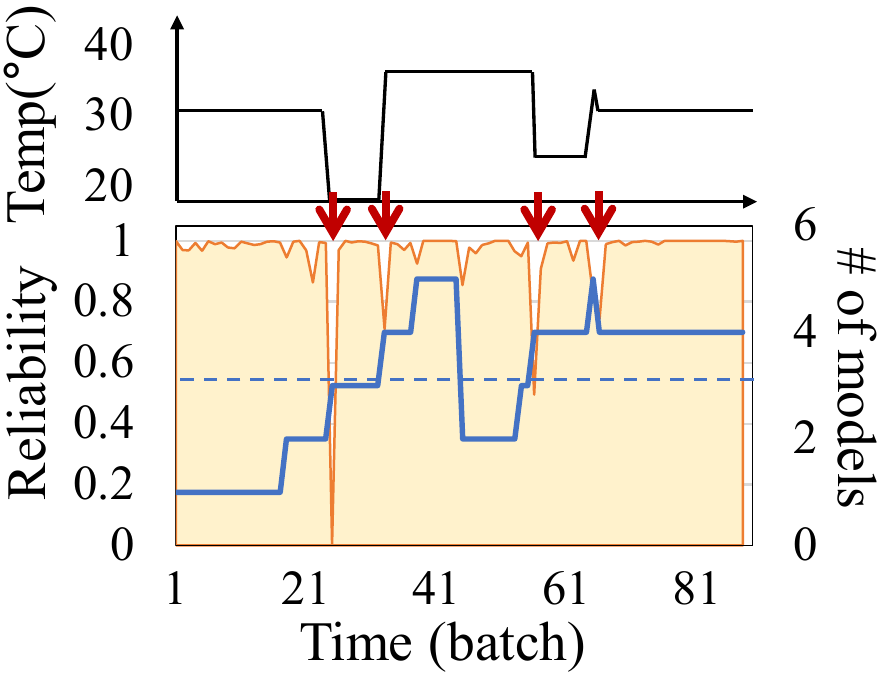}
        \vspace{-0.5cm}
        \caption{INSECTS-Abr.}
    \end{subfigure}
    \begin{subfigure}[b]{0.24\textwidth}
        \centering
        \includegraphics[width=\textwidth]{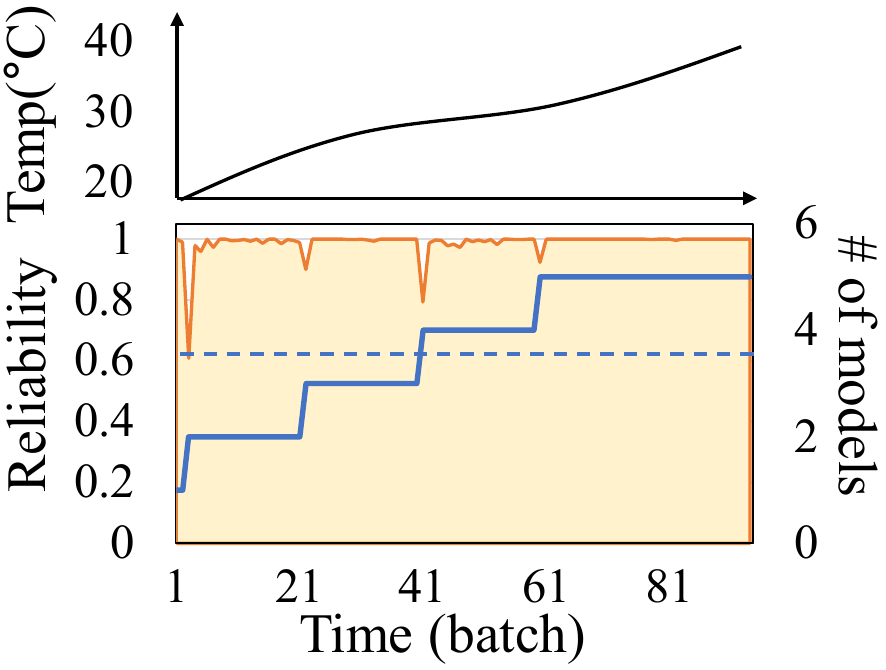}
        \vspace{-0.5cm}
        \caption{INSECTS-Inc.}
    \end{subfigure}
    \begin{subfigure}[b]{0.24\textwidth}
        \centering
        \includegraphics[width=\textwidth]{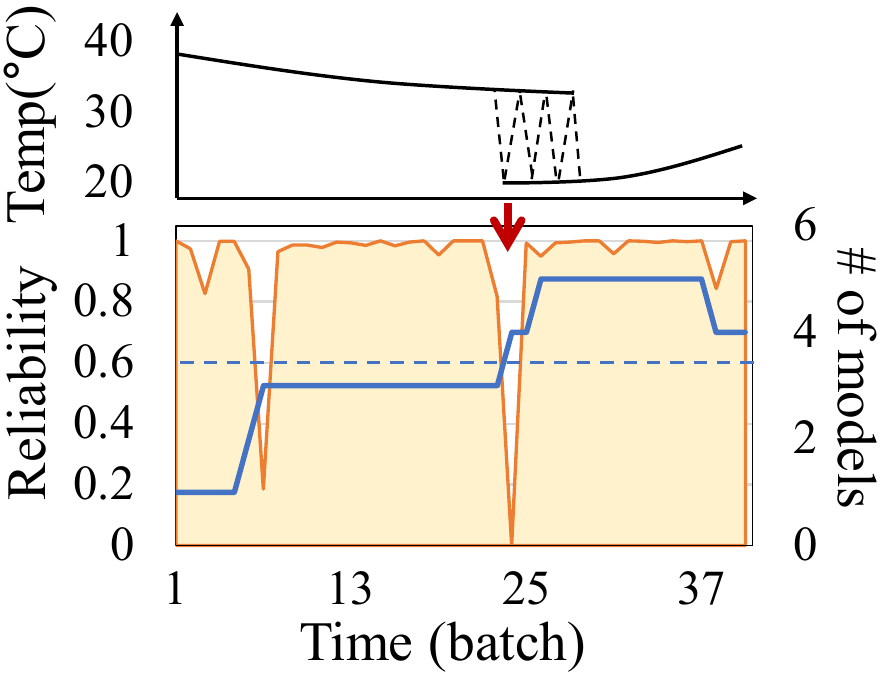}
        \vspace{-0.5cm}
        \caption{INSECTS-IncGrd.}
        \label{fig:pool_adp_IncGrd}
    \end{subfigure}
    \begin{subfigure}[b]{0.24\textwidth}
        \centering
        \includegraphics[width=\textwidth]{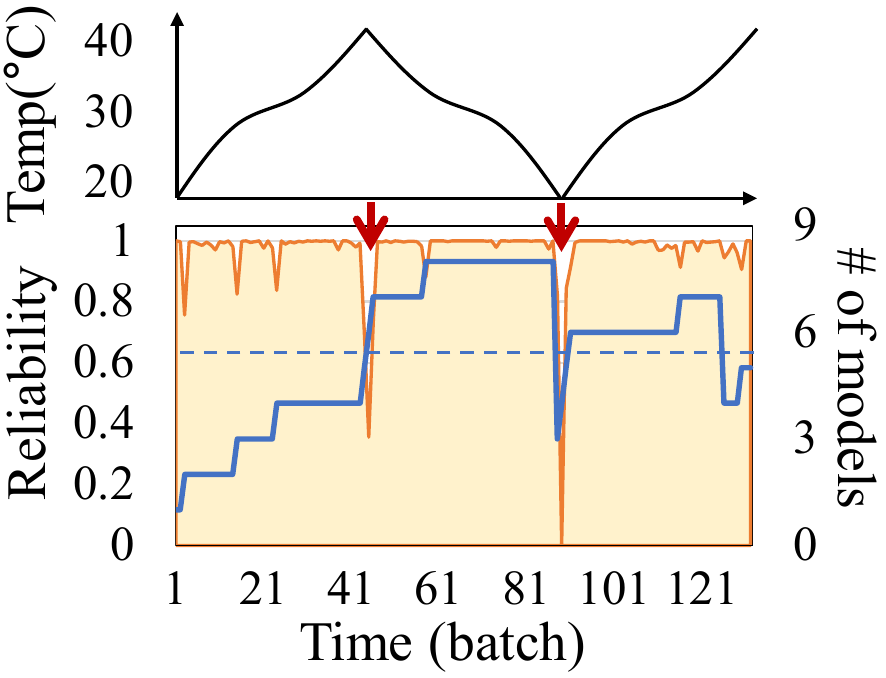}
        \vspace{-0.5cm}
        \caption{INSECTS-IncRec.}
    \end{subfigure}    
    \vspace{-0.35cm}
    \caption{The trends of the model pool reliability estimated by \algname{} in real concept-drifted data streams.}
    \label{fig:reliability_trend}
    \vspace{-0.4cm}
\end{figure*}

\subsubsection{\textbf{Algorithms}}
We evaluated three instances of \algname{} implemented with three state-of-the-art AE-based anomaly detection algorithms: RAPP\,\cite{RAPP}, RSRAE\,\cite{RSRAE}, and DAGMM\,\cite{DAGMM}. Each of the three \algname{} instances is denoted with their base models (e.g., \algname{}$_{RAPP}$); the only difference among them is their constituent AE model. For a more comprehensive evaluation, the streaming variants of the three AE-based algorithms and two popular RNN-based anomaly detection algorithms\,(LSTM-ED\,\cite{LSTM-ED} and REBM\,\cite{REBM}) are prepared and respectively denoted as sRAPP, sRSRAE, sDAGMM, sLSTM-ED, and sREMB. At the same time,  five state-of-the-art streaming anomaly detection algorithms with different approaches (ensemble IF-based RRCF\,\cite{Sud16}, LSH-based MStream\,\cite{MSTREAM}, KDE-based STARE\,\cite{Yoon20}, and LOF-based MiLOF\,\cite{Sal16} and  DILOF\,\cite{Na18}) were also compared. Algorithm-specific hyperparameters were either set to the default values suggested by the authors or tuned by us to achieve the best accuracy. Details of the compared algorithms and hyperparameter settings are given in Appendix \ref{apdx:algorithms}. 

\smallskip
\subsubsection{\textbf{Performance Metrics}} Since the exact threshold of anomaly scores for verifying anomalies can vary across different applications and contexts, we used the Area Under Receiver Operating Characteristic\,(AUC) as the accuracy measure, which is widely used to evaluate anomaly detection\,\cite{Cam16}. For the deep learning-based algorithms, the ensemble-based RRCF, and the hashing-based MStream, the mean and the standard error of \emph{ten} repetitions, each with the different random initialization, were measured. The wall clock time for processing a batch of varying sizes was measured for efficiency and scalability tests.

\smallskip
\subsubsection{\textbf{Computing Platform}} All experiments were conducted on a Linux server with Intel Core i7-6700, 16GB RAM, and 1TB HDD. Ubuntu 16.04, Python 3.8, and TensorFlow 2.2 were installed. NVIDIA TITAN X was used for the deep learning algorithms.


\smallskip
\subsection{Overall Accuracy Comparison}
\label{sec:overall_comparsion}
We compared the AUC achieved by the three instances of \algname{} (i.e., \algname{}$_{RAPP}$, \algname{}$_{RSRAE}$, and \algname{}$_{DAGMM}$) with other algorithms for all data sets. The results are shown in Table \ref{tbl:overall_performance} (see Appendix \ref{apdx:sAE_results} for the results of sRAPP, sRSRAE, and sDAGMM). Evidently, the \algname{} instances achieved the highest accuracy in online anomaly detection for all data sets, regardless of the concept drift type. Specifically, the accuracy of \algname{} instances outperformed the best accuracy among the other algorithms by up to 36.9\%\,(for \algname{}$_{RAPP}$ in MNIST-GrdRec), and by 12.0\% on average over all data sets. The \algname{} instances improved the accuracy of the streaming variants of their base models by 10.7\% for sRAPP, 6.3\% for sRSRAE, and 9.7\% for sDAGMM when averaged over all data sets. This result clearly demonstrates the model-agnostic behavior and versatility of \algname{}.

\begin{figure}[t]
    \centering
    \includegraphics[width=\columnwidth]{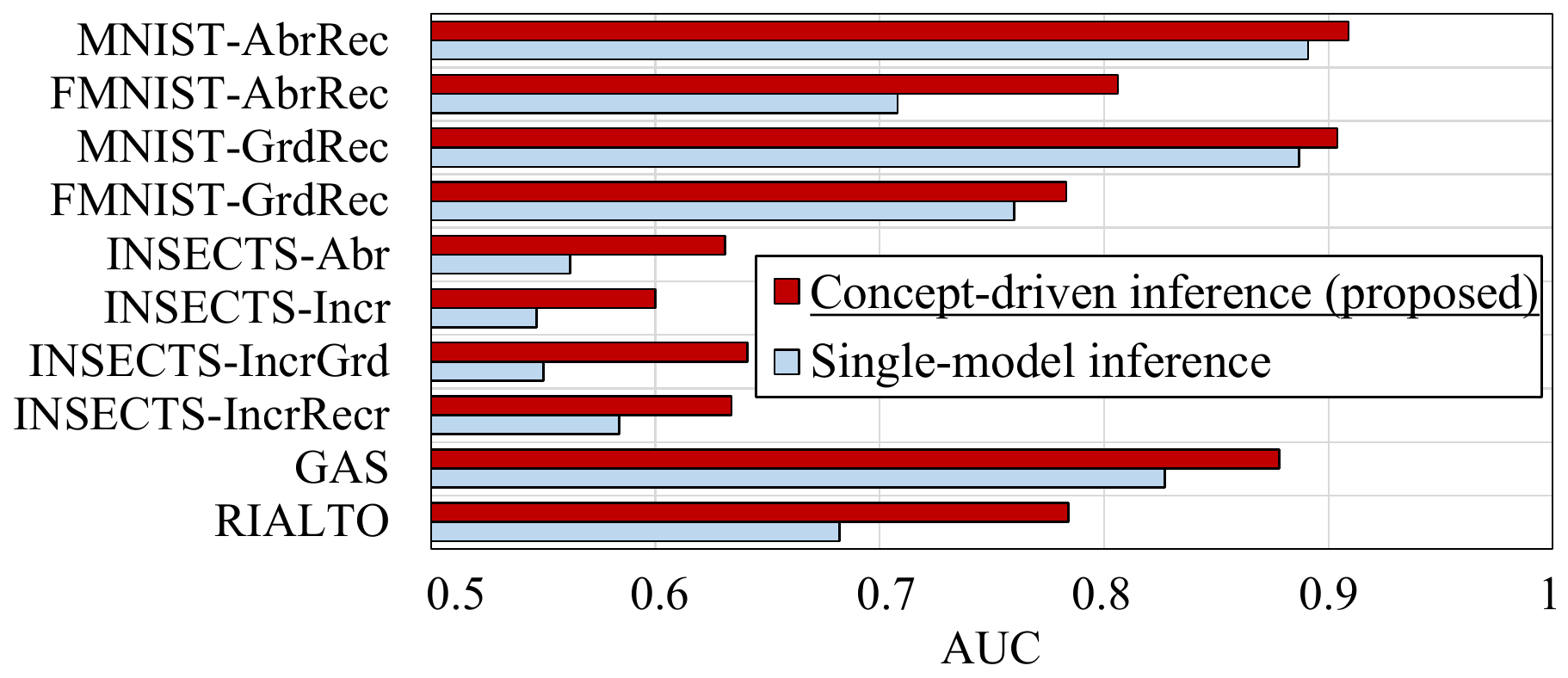}
    \vspace*{-0.7cm}
    \caption{Ablation study results of inference strategies.}
    \label{fig:ablation_inference}
\end{figure}

\subsection{Concept Drift Adaptation}
\label{sec:drift_awareness}

We tracked how the reliability of a model pool changes in real data streams with known concept drifts\,(INSECTS) where temperature changes refer to concept drifts. Figure \ref{fig:reliability_trend} shows the representative results of \algname{}$_{RSRAE}$ with the true trends of concepts and drift points. The model pool kept consistently high reliability throughout the entire data stream and the sudden drop points of the reliability were very close to the true abrupt drift points. This consistently high reliability was achieved using only five to eight models at most\,(3.9 models on average) in the model pool. This result means that \algname{} promptly detects and efficiently adapts to the real concept drifts so that it can achieve the highest anomaly detection accuracy with the minimum number of sufficient models. Interestingly, an unexpected drop was observed around the timestamp 7 in INSECTS-IncGrd in Figure \ref{fig:pool_adp_IncGrd}; it may have been caused by an additional unknown sudden drop of temperature during the incremental decrease or another type of concept drift\,(e.g., humidity change) that was overlooked.


\subsection{Ablation Study}
\label{sec:ablation}
We conducted ablation studies on the two main techniques used in \algname{}---the concept-driven inference and the concept drift-aware update. We present the results of \algname{}$_{RAPP}$ while the results from the other \algname{} instances showed similar patterns. For evaluating the efficacy of concept-driven inference, we prepared the variant of \algname{} employing the \emph{single-model inference} strategy, which uses only the most reliable model in the anomaly detection step. Figure \ref{fig:ablation_inference} shows that the concept-driven inference consistently improved the accuracy by up to 17\% over the single-model inference. This result confirms that it is worth considering various previous and ongoing concepts for calculating anomaly scores more exactly. 

For evaluating the efficacy of concept drift-aware update with the proposed \emph{similarity-based merge} strategy, we prepared the two variants of \algname{} respectively employing \emph{always-merge} strategy, which manages a single model by always merging with a new model, and the \emph{no-merge} strategy, which keeps all models without merging. As shown in Figure \ref{fig:ablation_merge}, the similarity-based merge achieved higher accuracy, by up to 17\% and 13\% over the always-merge and the no-merge, respectively. The similarity-based merge is more efficient than the no-merge by managing 37.6\% fewer models in the model pool, averaged over all data sets. These results show that keeping an adequate number of models with significantly different properties facilitates adapting to various types of concept drifts more accurately and efficiently.

\begin{figure}[t]
    \centering
    \includegraphics[width=\columnwidth]{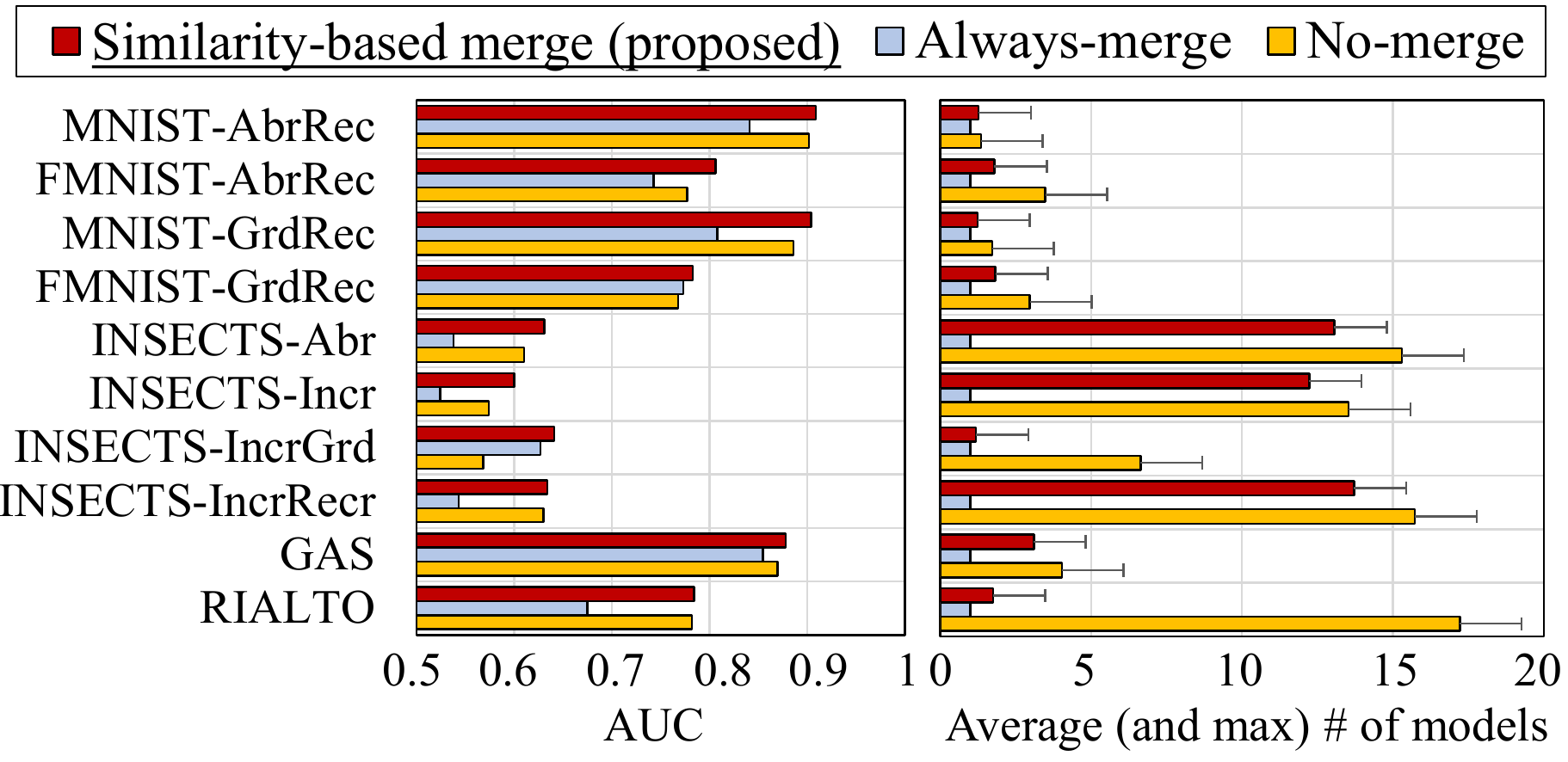}
    \caption{Ablation study results of merge strategies.}
    \label{fig:ablation_merge}
    \vspace*{-0.3cm}
\end{figure}


\subsection{Efficiency and Scalability}
\label{sec:efficiency_scalability}
We measured the processing time of \algname{}, simulating a real environment in which 128 to 2,048 devices emitting sensor-generated values. Figure \ref{fig:scalability} shows the representative results of \algname{}$_{RAPP}$, sRAPP, and RRCF (which is the ensemble-based algorithm and showed the strongest performance among the other algorithms). The other results are provided in Appendix \ref{apdx:scalability}. Note that the results from the other \algname{} instances and data sets showed similar patterns. Notably, \algname{} took less than a second for processing a batch of 512 data points, which would be sufficiently fast for a high-dimensional data stream from hundreds of devices processed by a single commodity machine. At the same time, \algname{} was consistently faster than the ensemble-based RRCF in all cases, which shows the merits of dynamically managed model pool compared with pre-fixed model ensembles. The increase rate of the processing time of \algname{} was comparable to those of the baselines. 

\begin{figure}[!t]
    \begin{subfigure}[b]{\columnwidth}
        \centering
        \includegraphics[width=0.7\textwidth]{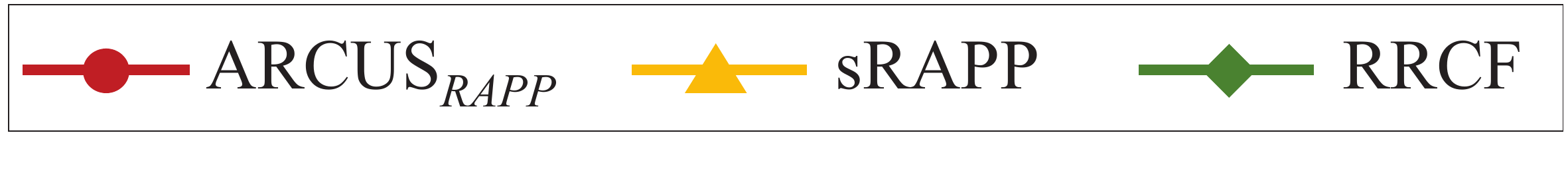}
    \end{subfigure}
    \begin{subfigure}[b]{0.325\columnwidth}
        \centering
        \includegraphics[width=\textwidth]{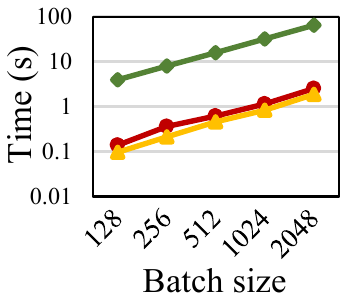}
    \end{subfigure}
    \begin{subfigure}[b]{0.325\columnwidth}
        \centering
        \includegraphics[width=\textwidth]{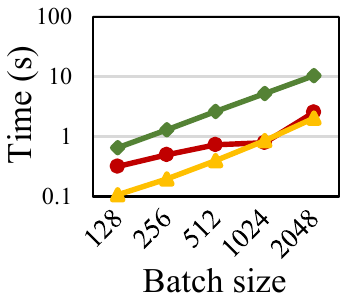}
    \end{subfigure}
    \begin{subfigure}[b]{0.325\columnwidth}
        \centering
        \includegraphics[width=\textwidth]{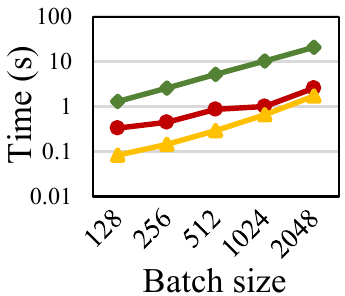}
    \end{subfigure}
    \text{\small \hspace{0.0cm} (a) MNIST-AbrRec.\hspace{0.6cm} (b) INSECTS-IncGrd. \hspace{0.8cm} (c) GAS\hspace{0.5cm}}
    \vspace*{-0.3cm}
    \caption{Scalability test results.}
    \label{fig:scalability}
\end{figure}

\begin{figure}[!t]
    \centering
    \includegraphics[width=\columnwidth]{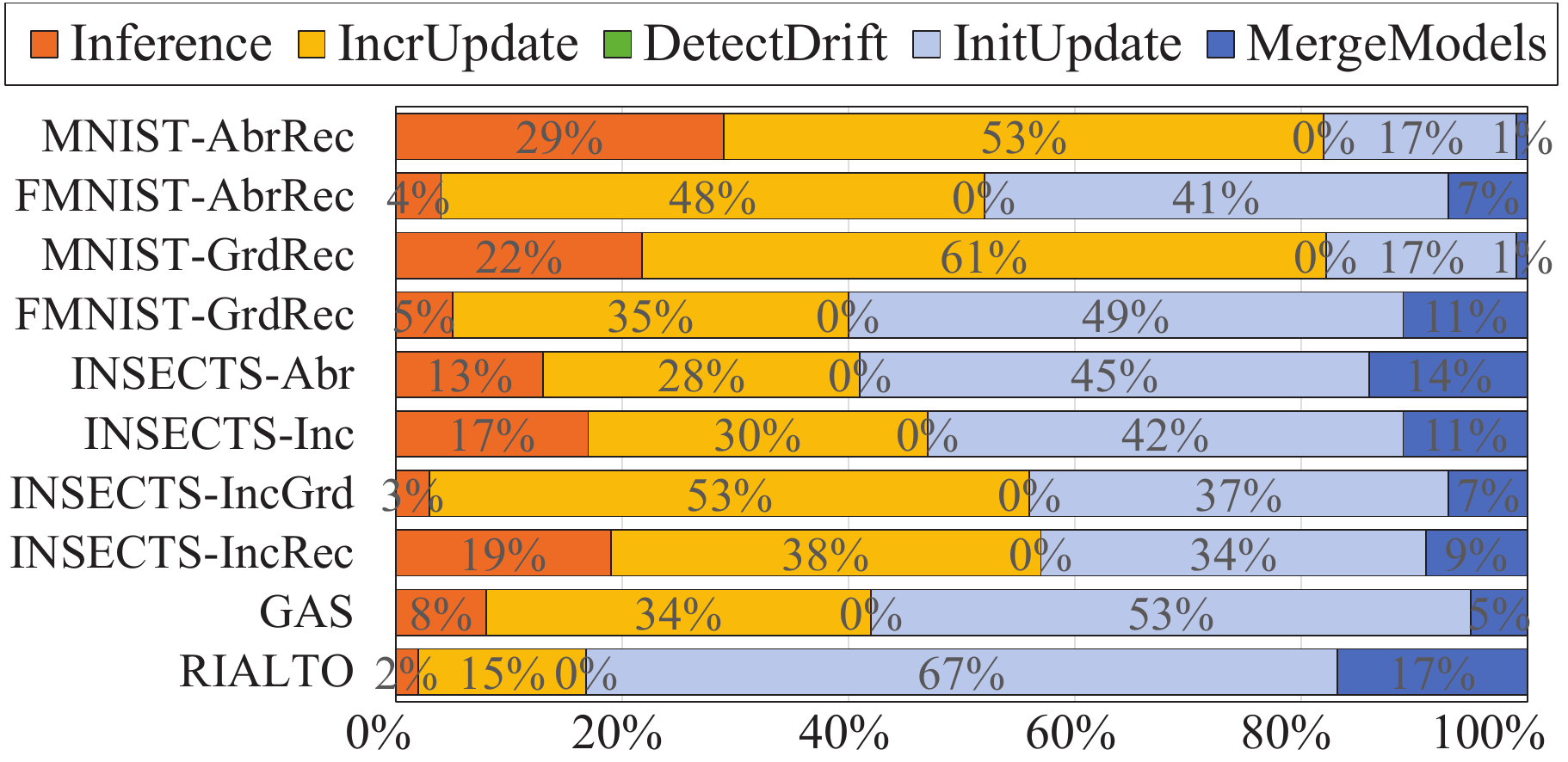}
    \caption{Processing time breakdown results.}
    \label{fig:breakdown}
    \vspace*{-0.6cm}
\end{figure}

Figure \ref{fig:breakdown} shows the breakdown of processing time of \algname{}$_{RAPP}$ into five steps for each data set. (The results of the other \algname{} instances were similar.) The five steps correspond to Line 4, Lines 8--9, Lines 6--7, Line 11, and Line 12 in Algorithm 1 in the paper. When averaged over all data sets, the processing time for inference and incremental model update was similar to that for the model pool adaptation consisting of drift detection, initial model update, and model merge. Except for the initial model update, which can be flexibly controlled by the user-provided number of initialization epochs, the remaining model pool adaptation\,(i.e., drift detection and model merging) took only 8.2\% of the total processing time. Interestingly, the specific proportions of each step vary across the data sets since the behavior of \algname{} is affected by the characteristics of the data stream. For instance, the model pool adaptation of MNIST data sets took less than 20\% of the total processing time, while those of GAS and RIALTO took more than 50\% of the total processing time. Overall, \algname{} tended to adapt more frequently in a data stream with unknown and implicit concept drifts\,(e.g., GAS and RIATLO) than a data stream with known and explicit concept drifts\,(e.g., MNIST).

\begin{figure}
    \centering
    \includegraphics[width=\columnwidth]{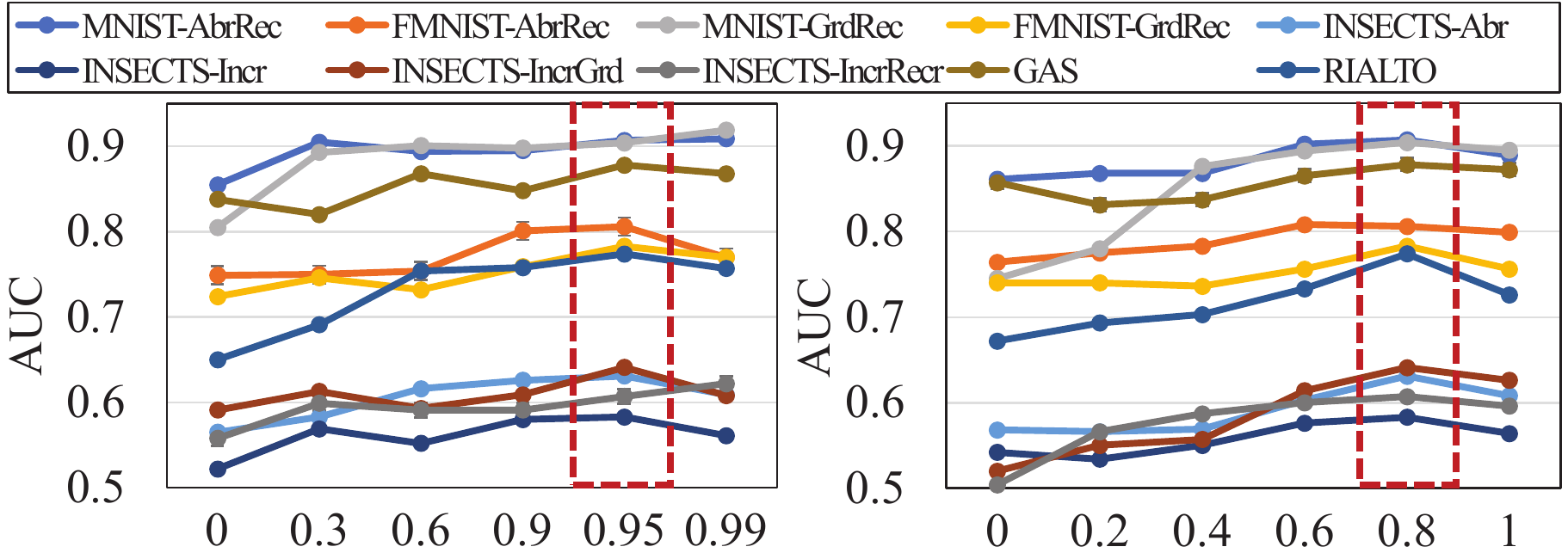}
    \text{\small \quad\quad (a) Reliability threshold\,($\alpha$).\quad\quad\quad (b) Similarity threshold\,($\gamma$).}
    \vspace{-0.2cm}
    \caption{Sensitivity analysis results.}
    \label{fig:sensitivity}
\end{figure}

\subsection{Parameter Sensitivity Analysis}
\label{sec:sensitivity}
We conducted sensitivity analysis on the two main parameters used in the concept drift-aware update step: the reliability threshold $\alpha$ (to trigger model pool update) and the similarity threshold $\gamma$ (to merge models in a model pool). Due to space limitation, we present the results of \algname{}$_{RAPP}$ while the results from the other \algname{} instances showed similar patterns. For $\alpha$, Figure \ref{fig:sensitivity}a shows that AUC peaks at or nearly converges around the default value, 0.95. An extremely high reliability threshold may help improve the accuracy marginally, but it will damage the efficiency because of too frequent pool updates. For $\gamma$, Figure \ref{fig:sensitivity}b shows that AUC peaks at or nearly converges around the default value, 0.8. As mentioned earlier, the similarity of models trained under the same concept was also usually higher than 0.8, which is also demonstrated in Figure \ref{fig:cdf_similarity} in Appendix \ref{apdx:algorithms}). Likewise, an extremely high similarity threshold may improve accuracy marginally, but it will incur too many models in a model pool. 


\section{Conclusion}
\label{sec:Conclusion}
This paper proposed \algname{}, a framework for online deep anomaly detection which can be instantiated with any AE-based deep anomaly detection method. \algname{} is specialized to handle complex evolving data streams by the \emph{adaptive model pooling} approach with two main techniques---\emph{concept-driven inference} and \emph{concept drift-aware update}. In comprehensive experiments using ten data sets, \algname{} outperformed state-of-the-art streaming anomaly detection methods by up to 37\% in accuracy. Much of this performance advantage is attributed to the versatile modeling power from model pooling. Overall, we believe that our work opens a new possibility in online anomaly detection research.

There are interesting directions \algname{} can be further developed into. 
First, the deep anomaly-detection model used to instantiate \algname{} can be extended beyond the AE. While we chose the AE as the default model to materialize the adaptive model pooling approach because of its structural simplicity and unsupervised learning mechanism, other models (e.g., GAN- or RNN-based) can be used under the same approach. The concept of model similarity, however, should be carefully tailored to the specific model since the latent representation similarity proposed for AE-based models may not always be applicable.
Second, a model pool can adopt other adaptation strategies than the model initialization from scratch or the model merging based on federated learning. For instance, the initialization and update of a model can be facilitated by the incorporation of shared common knowledge of models through distillation or regularization strategies based on continual learning or transfer learning.
Third, a semi-supervised approach can help understand the dynamics of a model pool and optimize it for different concept drifts observed in a data stream. A few human-provided labels or domain knowledge of anomalies or concept drifts are useful to derive the reliability threshold or the similarity threshold which are the main tunable hyperparameters in \algname{}.

\begin{acks}
This work was partly supported by Samsung Electronics Co., Ltd. (IO201211-08051-01) through the Strategic Collaboration Academic Program and Institute of Information \& Communications Technology Planning \& Evaluation\,(IITP) grant funded by the Korea government\,(MSIT) (No.\ 2022-0-00157, Robust, Fair, Extensible Data-Centric Continual Learning). The first author was also supported by Basic Science Research Program through the National Research Foundation of Korea\,(NRF) funded by the Ministry of Education (2021R1A6A3A14043765).
\end{acks}


\bibliographystyle{abbrv}
\bibliography{papers}

\begin{thebibliography}{10}

\bibitem{MSTREAM}
S.~Bhatia, A.~Jain, P.~Li, R.~Kumar, and B.~Hooi.
\newblock Mstream: Fast anomaly detection in multi-aspect streams.
\newblock In {\em Proc. \ WWW}, pages 3371--3382, 2021.

\bibitem{Bif07}
A.~Bifet and R.~Gavalda.
\newblock Learning from time-changing data with adaptive windowing.
\newblock In {\em Proc.\ SDM}, pages 443--448, 2007.

\bibitem{Cam16}
G.~O. Campos, A.~Zimek, J.~Sander, R.~J. Campello, B.~Micenkov{\'a},
  E.~Schubert, I.~Assent, and M.~E. Houle.
\newblock On the evaluation of unsupervised outlier detection: Measures,
  datasets, and an empirical study.
\newblock {\em DMKD}, 30(4):891--927, 2016.

\bibitem{AEE}
J.~Chen, S.~Sathe, C.~Aggarwal, and D.~Turaga.
\newblock Outlier detection with autoencoder ensembles.
\newblock In {\em Proc. \ SDM}, pages 90--98. SIAM, 2017.

\bibitem{Fri14}
I.~Fr{\'\i}as-Blanco, J.~del Campo-{\'A}vila, G.~Ramos-Jimenez,
  R.~Morales-Bueno, A.~Ortiz-D{\'\i}az, and Y.~Caballero-Mota.
\newblock Online and non-parametric drift detection methods based on
  hoeffding’s bounds.
\newblock {\em IEEE TKDE}, 27(3):810--823, 2014.

\bibitem{Gam13}
J.~Gama, R.~Sebasti{\~a}o, and P.~P. Rodrigues.
\newblock On evaluating stream learning algorithms.
\newblock {\em Machine Learning}, 90(3):317--346, 2013.

\bibitem{HSIC}
A.~Gretton, O.~Bousquet, A.~Smola, and B.~Sch{\"o}lkopf.
\newblock Measuring statistical dependence with hilbert-schmidt norms.
\newblock In {\em Proc.\ ALT}, 2005.

\bibitem{Sud16}
S.~Guha, N.~Mishra, G.~Roy, and O.~Schrijvers.
\newblock Robust random cut forest based anomaly detection on streams.
\newblock In {\em Proc.\ ICML}, pages 2712--2721, 2016.

\bibitem{incrRNN1}
T.~Guo, Z.~Xu, X.~Yao, H.~Chen, K.~Aberer, and K.~Funaya.
\newblock Robust online time series prediction with recurrent neural networks.
\newblock In {\em Proc.\ DSAA}, pages 816--825. IEEE, 2016.

\bibitem{Hoe94}
W.~Hoeffding.
\newblock Probability inequalities for sums of bounded random variables.
\newblock In {\em The Collected Works of Wassily Hoeffding}, pages 409--426.
  Springer, 1994.

\bibitem{RAPP}
K.~H. Kim, S.~Shim, Y.~Lim, J.~Jeon, J.~Choi, B.~Kim, and A.~S. Yoon.
\newblock {RaPP}: Novelty detection with reconstruction along projection
  pathway.
\newblock In {\em Proc.\ ICLR}, 2020.

\bibitem{Kno98}
E.~M. Knox and R.~T. Ng.
\newblock Algorithms for mining distance-based outliers in large datasets.
\newblock In {\em Proc.\ VLDB}, pages 392--403, 1998.

\bibitem{CKA}
S.~Kornblith, M.~Norouzi, H.~Lee, and G.~Hinton.
\newblock Similarity of neural network representations revisited.
\newblock In {\em Proc.\ ICML}, pages 3519--3529, 2019.

\bibitem{RSRAE}
C.-H. Lai, D.~Zou, and G.~Lerman.
\newblock Robust subspace recovery layer for unsupervised anomaly detection.
\newblock In {\em Proc.\ ICLR}, 2020.

\bibitem{MNIST}
Y.~LeCun, L.~Bottou, Y.~Bengio, and P.~Haffner.
\newblock Gradient-based learning applied to document recognition.
\newblock {\em Proc.\ the IEEE}, 86(11):2278--2324, 1998.

\bibitem{Li19}
X.~Li, K.~Huang, W.~Yang, S.~Wang, and Z.~Zhang.
\newblock On the convergence of fedavg on non-iid data.
\newblock In {\em Proc.\ ICLR}, 2019.

\bibitem{IF}
F.~T. Liu, K.~M. Ting, and Z.-H. Zhou.
\newblock Isolation forest.
\newblock In {\em Proc.\ ICDM}, 2008.

\bibitem{RIALTO}
V.~Losing, B.~Hammer, and H.~Wersing.
\newblock {KNN} classifier with self adjusting memory for heterogeneous concept
  drift.
\newblock In {\em Proc.\ ICDM}, pages 291--300, 2016.

\bibitem{Lu18}
J.~Lu, A.~Liu, F.~Dong, F.~Gu, J.~Gama, and G.~Zhang.
\newblock Learning under concept drift: A review.
\newblock {\em IEEE TKDE}, 31(12):2346--2363, 2018.

\bibitem{LSTM-ED}
P.~Malhotra, A.~Ramakrishnan, G.~Anand, L.~Vig, P.~Agarwal, and G.~Shroff.
\newblock {LSTM}-based encoder-decoder for multi-sensor anomaly detection.
\newblock In {\em Proc.\ ICML Anomaly Detection Workshop}, 2016.

\bibitem{Mcm17}
B.~McMahan, E.~Moore, D.~Ramage, S.~Hampson, and B.~A. y~Arcas.
\newblock Communication-efficient learning of deep networks from decentralized
  data.
\newblock In {\em Proc.\ AISTATS}, pages 1273--1282, 2017.

\bibitem{Na18}
G.~S. Na, D.~Kim, and H.~Yu.
\newblock {DILOF}: Effective and memory efficient local outlier detection in
  data streams.
\newblock In {\em Proc.\ KDD}, pages 1993--2002, 2018.

\bibitem{REPEN}
G.~Pang, L.~Cao, L.~Chen, and H.~Liu.
\newblock Learning representations of ultrahigh-dimensional data for random
  distance-based outlier detection.
\newblock In {\em Proc.\ KDD}, pages 2041--2050, 2018.

\bibitem{DAD}
G.~Pang, C.~Shen, L.~Cao, and A.~V.~D. Hengel.
\newblock Deep learning for anomaly detection: A review.
\newblock {\em ACM Computing Surveys (CSUR)}, 54(2):1--38, 2021.

\bibitem{Ruf20}
L.~Ruff, R.~A. Vandermeulen, N.~G{\"o}rnitz, A.~Binder, E.~M{\"u}ller, K.-R.
  M{\"u}ller, and M.~Kloft.
\newblock Deep semi-supervised anomaly detection.
\newblock In {\em Proc.\ ICLR}, 2020.

\bibitem{HBP}
D.~Sahoo, Q.~Pham, J.~Lu, and S.~C.~H. Hoi.
\newblock Online deep learning: Learning deep neural networks on the fly.
\newblock In {\em Proc.\ IJCAI}, pages 2660--2666, 7 2018.

\bibitem{Sal16}
M.~Salehi, C.~Leckie, J.~C. Bezdek, T.~Vaithianathan, and X.~Zhang.
\newblock Fast memory efficient local outlier detection in data streams.
\newblock {\em IEEE TKDE}, 28(12):3246--3260, 2016.

\bibitem{incrRNN2}
S.~Saurav, P.~Malhotra, V.~TV, N.~Gugulothu, L.~Vig, P.~Agarwal, and G.~Shroff.
\newblock Online anomaly detection with concept drift adaptation using
  recurrent neural networks.
\newblock In {\em Proc.\ CODS-COMAD}, pages 78--87, 2018.

\bibitem{INSECTS}
V.~Souza, D.~M.~d. Reis, A.~G. Maletzke, and G.~E. Batista.
\newblock Challenges in benchmarking stream learning algorithms with real-world
  data.
\newblock {\em DMKD}, 34:1805--1858, 2020.

\bibitem{GAS}
A.~Vergara, S.~Vembu, T.~Ayhan, M.~A. Ryan, M.~L. Homer, and R.~Huerta.
\newblock Chemical gas sensor drift compensation using classifier ensembles.
\newblock {\em Sensors and Actuators B: Chemical}, 166:320--329, 2012.

\bibitem{F-MNIST}
H.~Xiao, K.~Rasul, and R.~Vollgraf.
\newblock Fashion-{MNIST}: a novel image dataset for benchmarking machine
  learning algorithms.
\newblock {\em arXiv:1708.07747}, 2017.

\bibitem{IADM}
Y.~Yang, D.-W. Zhou, D.-C. Zhan, H.~Xiong, and Y.~Jiang.
\newblock Adaptive deep models for incremental learning: Considering capacity
  scalability and sustainability.
\newblock In {\em Proc.\ KDD}, page 74–82, 2019.

\bibitem{Yoon19}
S.~Yoon, J.-G. Lee, and B.~S. Lee.
\newblock {NETS}: Extremely fast outlier detection from a data stream via
  set-based processing.
\newblock In {\em Proc.\ VLDB}, pages 1303--1315, 2019.

\bibitem{Yoon20}
S.~Yoon, J.-G. Lee, and B.~S. Lee.
\newblock Ultrafast local outlier detection from a data stream with stationary
  region skipping.
\newblock In {\em Proc.\ KDD}, pages 1181--1191, 2020.

\bibitem{Yoon21}
S.~Yoon, Y.~Shin, J.-G. Lee, and B.~S. Lee.
\newblock Multiple dynamic outlier-detection from a data stream by exploiting
  duality of data and queries.
\newblock In {\em Proc.\ SIGMOD}, pages 2063--2075, 2021.

\bibitem{REBM}
S.~Zhai, Y.~Cheng, W.~Lu, and Z.~Zhang.
\newblock Deep structured energy based models for anomaly detection.
\newblock In {\em Proc.\ ICML}, pages 1100--1109, 2016.

\bibitem{RDA}
C.~Zhou and R.~C. Paffenroth.
\newblock Anomaly detection with robust deep autoencoders.
\newblock In {\em Proc.\ KDD}, pages 665--674, 2017.

\bibitem{DAGMM}
B.~Zong, Q.~Song, M.~R. Min, W.~Cheng, C.~Lumezanu, D.~Cho, and H.~Chen.
\newblock Deep autoencoding gaussian mixture model for unsupervised anomaly
  detection.
\newblock In {\em Proc.\ ICLR}, 2018.

\end{thebibliography}


\clearpage
\appendix

\section{Supplementary Material}
\subsection{Data Sets}
\label{apdx:datasets}
\begin{itemize}[leftmargin=10pt, noitemsep]
    \item \ul{Synthetic data sets}: MNIST\,\cite{MNIST} and FMNIST\,\cite{F-MNIST} are handwritten digit images and fashion item images, respectively, and are widely used to \emph{simulate} anomaly detection scenarios with a high complexity\,\cite{RDA, RAPP, RSRAE} or data streams\,\cite{HBP, IADM}.
    To represent a concept, we randomly set certain digits of item classes out of ten classes as anomaly target classes and the other ones as normal classes.  
    The duration of each concept was varied randomly between one to four times the batch size. Two types of concept drifts were simulated: ``abrupt and recurrent'' (MNIST-AbrRec, FMNIST-AbrRec) and ``gradual and recurrent'' (MNIST-GrdRec, FMNIST-GrdRec).
    \item \ul{Real data sets\,(with known drifts)}: INSECTS\,\cite{INSECTS} is a real concept-drifting benchmark data set containing optical sensor values collected while monitoring flying insects\,(e.g., mosquitos). Concepts are controlled by changing the temperature level, which affects the flying behaviors of insects. The southern house mosquito, which transmits zoonotic diseases, was chosen as an anomaly target class. We used four types of INSECTS data sets with different concept drift types---``abrupt'' (INSECTS-Abr), ``incremental'' (INSECTS-Inc), ``incremental and gradual'' (INSECTS-IncGrd), and ``incremental and recurrent'' (INSECTS-IncRec).
    \item \ul{Real data sets\,(with unknown drifts)}: GAS\,\cite{GAS} is a data set gathered in a gas delivery platform for 36 months and contains sensor values from monitoring six types of pure gaseous substances. A different gas concentration refers to a concept where the true drift types and timings are unknown. Acetaldehyde, a toxic chemical, was chosen as an anomaly target class. RIALTO\,\cite{RIALTO} contains normalized RGB encodings from 20 days of video recordings of ten buildings around the Rialto bridge in Venice. The building class 0 was set as an anomaly target class. Weather and lighting conditions refer to concepts as they directly affect the video recording results, while the true types and timings of concept drifts are unknown. 
\end{itemize}

\smallskip
\subsection{Algorithms and Hyperparameter Settings}
\label{apdx:algorithms}
\subsubsection{\textbf{\algname{} and AE-based Algorithms}}
We used the three AE-based state-of-the-art anomaly detection algorithms: RAPP\,\cite{RAPP}\footnote{RAPP: \url{https://github.com/Aiden-Jeon/RaPP}}, RSRAE\,\cite{RSRAE}\footnote{RSRAE: \url{https://github.com/dmzou/RSRAE}}, and DAGMM\,\cite{DAGMM}\footnote{DAGMM: \url{https://github.com/tnakae/DAGMM}}. They are all based on a basic AE and its variants (e.g., variational AE or denoising AE) but use different techniques for anomaly detection---hidden reconstruction errors along the projection pathway of encoding and decoding layers by RAPP, a linear transformation layer in the latent space by RSRAE, and the Gaussian mixture model added to the AE by DAGMM. We implemented the three algorithms based on the source code publicly available.  

Each of the three algorithms was used for (i) a straightforward streaming variant and (ii) an AE-based model $M$ of \algname{}. For the first purpose, we re-trained an AE model incrementally whenever a new batch is received. The streaming variants of them are referred to sRAPP, sRSRAE, and sDAGMM, respectively. For the second purpose (i.e., to incorporate each model into \algname{}), we implemented the interface for creating a new model, executing training epochs with incoming batches, obtaining anomaly scores, extracting latent representations, and merging with existing models. As mentioned in the main paper, the reliability threshold and the similarity threshold for \algname{} were fixed to 0.95\,(the commonly-used statistical significance threshold) and 0.8\,(the empirically-confirmed threshold in Figure \ref{fig:cdf_similarity}), respectively.

\begin{figure}[!t]
\centering
    \begin{subfigure}[b]{\columnwidth}
        \includegraphics[width=\columnwidth]{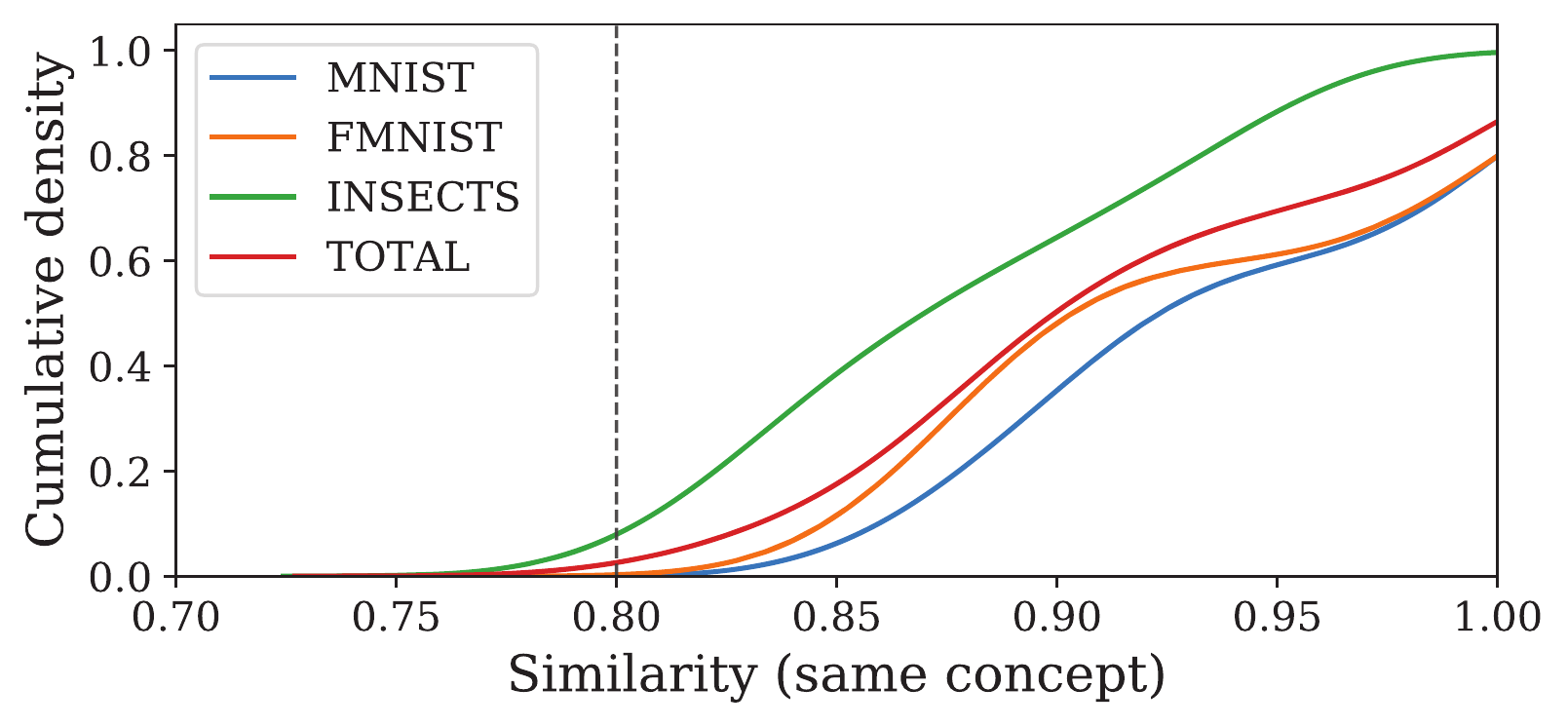}
    \end{subfigure}
    \begin{subfigure}[b]{\columnwidth}
        \includegraphics[width=\columnwidth]{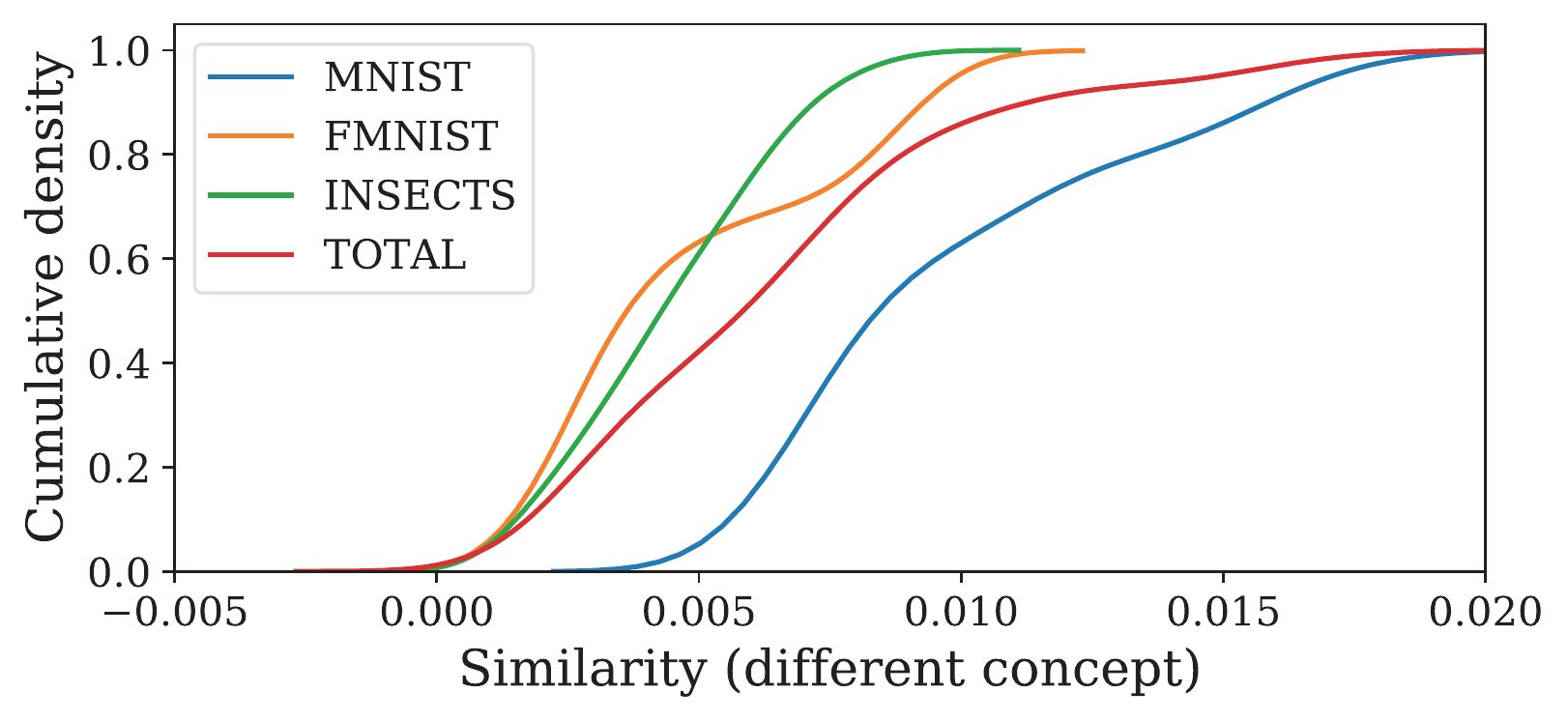}
    \end{subfigure}    
    \vspace{-0.4cm}
    \caption{For three data streams with known concepts, we analyzed cumulative density as a function of the latent representation similarities between AE-based models trained under the same concept (top) or the different concepts (bottom). The similarity values between the models trained under the same concepts are mostly above 0.8, while those under the different concepts are at most 0.02.}
    \label{fig:cdf_similarity}
\end{figure}

\begin{figure*}[!t]
    \begin{subfigure}[b]{\textwidth}
        \centering
        \captionsetup{justification=centering}
        \includegraphics[width=0.35\textwidth]{Figures/Scalability_legend.pdf}
    \end{subfigure}
    \begin{subfigure}[b]{0.14\textwidth}
        \centering
        \captionsetup{justification=centering}
        \includegraphics[width=\textwidth]{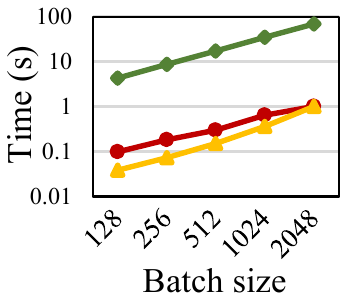}
        \caption{FMNIST\newline-AbrRec.}
    \end{subfigure}
    \hspace{-0.1cm}
    \begin{subfigure}[b]{0.14\textwidth}
        \centering
        \captionsetup{justification=centering}
        \includegraphics[width=\textwidth]{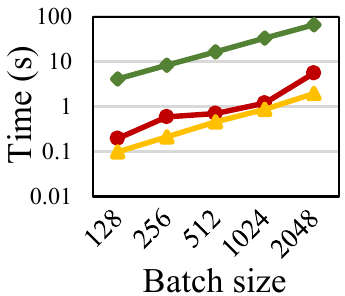}
        \caption{MNIST\newline-GrdRec.}
    \end{subfigure}
    \hspace{-0.1cm}
    \begin{subfigure}[b]{0.14\textwidth}
        \centering
        \captionsetup{justification=centering}
        \includegraphics[width=\textwidth]{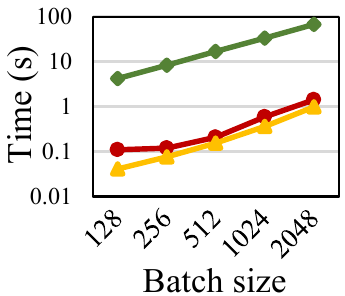}
        \caption{FMNIST\newline-GrdRec.}
    \end{subfigure}
    \hspace{-0.1cm}
    \begin{subfigure}[b]{0.14\textwidth}
        \centering
        \captionsetup{justification=centering}
        \includegraphics[width=\textwidth]{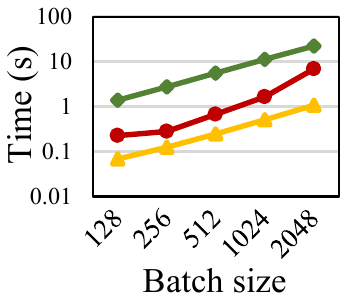}
        \caption{INSECTS\newline-Abr.}
    \end{subfigure}
    \hspace{-0.1cm}
    \begin{subfigure}[b]{0.14\textwidth}
        \centering
        \captionsetup{justification=centering}
        \includegraphics[width=\textwidth]{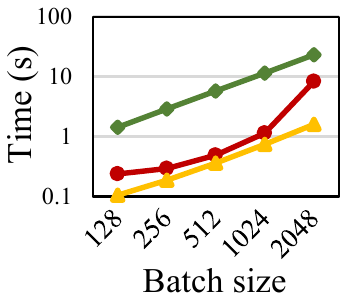}
        \caption{INSECTS\newline-Inc.}
    \end{subfigure}
    \hspace{-0.1cm}
    \begin{subfigure}[b]{0.14\textwidth}
        \centering
        \captionsetup{justification=centering}
        \includegraphics[width=\textwidth]{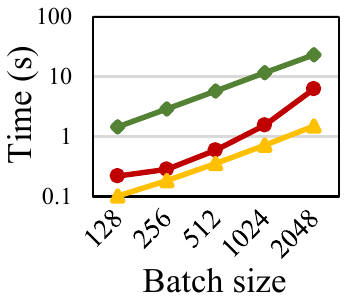}
        \caption{INSECTS\newline-IncRec.\!}
    \end{subfigure}
    \hspace{-0.1cm}
    \begin{subfigure}[b]{0.14\textwidth}
        \centering
        \captionsetup{justification=centering}
        \includegraphics[width=\textwidth]{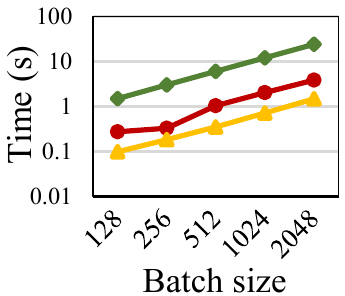}
        \caption{RIALTO.\newline}
    \end{subfigure}
    \vspace{-0.1cm}
    \caption{Scalability test results for all the data sets omitted in the paper.}
    \vspace{-0.1cm}
    \label{fig:scalability_others}
\end{figure*}

We set the number of epochs for updating the AE-based algorithms to 5 for the initial update and 1 for the incremental update. The batch size was set to 512 (with the mini-batch size 32). The size of a latent space in AE-based models was set to the number of principal components, which guarantees at least 70\% of the variance to be explained, following the relevant work\,\cite{RAPP}. The optimal number of layers\,(i.e., the depth of the encoder\,/\,decoder) and learning rate were tuned to achieve the best result from 2 to 5 and from 0.1 to 0.0001, respectively, for each AE-based model. Unless otherwise specified, the encoding layer sizes of the AE-based algorithms were proportionally decreased from the input dimensionality to the latent dimensionality given the number of layers, and the decoding layer sizes were increased in the opposite way. The layer size and learning rate determined for each data set and algorithm are provided with the source code for reproducibility.
    
\subsubsection{\textbf{RNN-based Algorithms}} We prepared the streaming variants of the popular RNN-based anomaly detection algorithms LSTM-ED\,\cite{LSTM-ED} and REBM\,\cite{REBM}\footnote{LSTM-ED and REBM: \url{https://github.com/KDD-OpenSource/DeepADoTS}} to be usable in our problem setting. LSTM-ED is based on an encoder-decoder network with long short-term memory, and REBM is based on a recurrent energy-based model. For the streaming variants of the models, referred to sLSTM-ED and sREBM, we trained the models incrementally by continuously feeding incoming batches over data streams. Epoch number, batch (and mini-batch) size, and latent space size were set in the same way as applied in the AE-based algorithm. The source codes of the two algorithms are available in the public repository.

\subsubsection{\textbf{Streaming Algorithms}} We used the five existing state-of-the-art streaming anomaly detection algorithms: MStream\,\cite{MSTREAM}, STARE\,\cite{Yoon20}\footnote{STARE: \url{https://github.com/kaist-dmlab/STARE}}, DILOF\,\cite{Na18}, MiLOF\,\cite{Sal16}, and RRCF\,\cite{Sud16}\footnote{RRCF: \url{https://github.com/kLabUM/rrcf}}.  STARE employs kernel density estimation\,(KDE)-based local outlier detection, MiLOF and DILOF employ kNN for local outlier factor\,(LOF), RRCF employs ensemble decision trees based on the isolation forest\,\cite{IF}, and MStream\,\cite{MSTREAM}\footnote{MStream: \url{https://github.com/Stream-AD/MStream}} employs locality-sensitive hashing with dimensionality reduction. We implemented the five algorithms based on the source codes provided by the authors or publicly available. Similarly to the AE- or RNN-based algorithms, incoming batches of 512 data points over data streams were fed to each algorithm for continuous inference and update.

While most hyperparameters in these algorithms are set to the default values suggested by the authors, \cmmnt{most} sensitive hyperparameters \cmmnt{in each algorithm} are tuned to achieve the best accuracy. For STARE, MiLOF, and DILOF, the number of neighbors (for KDE-based local outlier scores or LOF scores) was tuned between 2 and the batch size. For RRCF, the number of trees was tuned from 2 to 16, and the size of the trees was tuned between the batch size and ten times the batch size. For MStream, the temporal decaying factor was tuned between 0 and 1 while using the AE-based dimensionality reduction which showed the best results in the original paper.


\subsection{Performance Comparison of the Streaming Variants of AE-based Algorithms}
\label{apdx:sAE_results}
Table \ref{tbl:sAE_performance} shows the AUC results of the streaming variants of AE-based algorithms (i.e., sRAPP, sRSRAE, and sDAGMM) which are omitted in Table \ref{tbl:overall_performance} in the main paper. The comparison between these streaming variants and their corresponding \algname{} instances---sRAPP and \algname{}$_{RAPP}$, sRSRAE and \algname{}$_{RSRAE}$, and sDAGMM and \algname{}$_{DAGMM}$---directly shows the benefit of our adaptive model pooling technique.
The streaming variants showed much lower accuracy than the corresponding \algname{} instances, because the variants have limitations in handling diverse concept drifts.

\begin{table}[t]
\center
\small
\caption{The AUC results of the streaming variants of AE-based algorithms omitted in Table \ref{tbl:overall_performance}.}
\vspace{-0.2cm}
\label{tbl:sAE_performance}
\begin{tabular}[c]{@{}C{1.2cm}C{2.2cm}|C{1.0cm}C{1.0cm}C{1.2cm}@{}}
\toprule
\multicolumn{1}{l}{}  &  Data set    & sRAPP & sRSRAE   & sDAGMM \\ \toprule
\multirow{7}{*}{{Synthetic}}                                                         & \begin{tabular}[c]{@{}c@{}}MNIST-AbrRec\end{tabular}         & \begin{tabular}[c]{@{}c@{}}0.860\\ ($\pm0.006$)\end{tabular} & \begin{tabular}[c]{@{}c@{}}0.756\\ ($\pm0.006$)\end{tabular} & \begin{tabular}[c]{@{}c@{}}0.637\\ ($\pm0.010$)\end{tabular} 
\\ \cdashline{2-5}
& \begin{tabular}[c]{@{}c@{}}FMNIST-AbrRec\end{tabular}       & \begin{tabular}[c]{@{}c@{}}0.697\\ ($\pm0.018$)\end{tabular} & \begin{tabular}[c]{@{}c@{}}0.706\\ ($\pm0.006$)\end{tabular} & \begin{tabular}[c]{@{}c@{}}0.640\\ ($\pm0.012$)\end{tabular} \\  \cdashline{2-5}

& \begin{tabular}[c]{@{}c@{}}MNIST-GrdRec\end{tabular}       & \begin{tabular}[c]{@{}c@{}}0.813\\ ($\pm0.019$)\end{tabular} & \begin{tabular}[c]{@{}c@{}}0.752\\ ($\pm0.004$)\end{tabular} & \begin{tabular}[c]{@{}c@{}}0.633\\ ($\pm0.017$)\end{tabular} \\ \cdashline{2-5}

& \begin{tabular}[c]{@{}c@{}}FMNIST-GrdRec\end{tabular}       & \begin{tabular}[c]{@{}c@{}}0.718\\ ($\pm0.020$)\end{tabular} & \begin{tabular}[c]{@{}c@{}}0.665\\ ($\pm0.004$)\end{tabular} & \begin{tabular}[c]{@{}c@{}}0.610\\ ($\pm0.006$)\end{tabular}\\ \midrule

\multirow{7}{*}{\begin{tabular}[c]{@{}c@{}} Real\\ (known \\drifts)\end{tabular}}   & \begin{tabular}[c]{@{}c@{}}INSECTS-Abr\end{tabular}    & \begin{tabular}[c]{@{}c@{}}0.601\\ ($\pm0.017$)\end{tabular} & \begin{tabular}[c]{@{}c@{}}0.797\\ ($\pm0.002$)\end{tabular} & \begin{tabular}[c]{@{}c@{}}0.596\\ ($\pm0.009$)\end{tabular} \\ \cdashline{2-5}


   & \begin{tabular}[c]{@{}c@{}}INSECTS-Inc\end{tabular}    & \begin{tabular}[c]{@{}c@{}}0.528\\ ($\pm0.026$)\end{tabular} & \begin{tabular}[c]{@{}c@{}}0.765\\ ($\pm0.002$)\end{tabular} & \begin{tabular}[c]{@{}c@{}}0.545\\ ($\pm0.024$)\end{tabular}   \\ \cdashline{2-5}
   
   & \begin{tabular}[c]{@{}c@{}}INSECTS-IncGrd\end{tabular} & \begin{tabular}[c]{@{}c@{}}0.553\\ ($\pm0.010$)\end{tabular} & \begin{tabular}[c]{@{}c@{}}0.827\\ ($\pm0.001$)\end{tabular}  & \begin{tabular}[c]{@{}c@{}}0.628\\ ($\pm0.025$)\end{tabular}   \\ \cdashline{2-5}

   & \begin{tabular}[c]{@{}c@{}}INSECTS-IncRec\end{tabular} & \begin{tabular}[c]{@{}c@{}}0.562\\ ($\pm0.024$)\end{tabular} & \begin{tabular}[c]{@{}c@{}}0.786\\ ($\pm0.001$)\end{tabular}  & \begin{tabular}[c]{@{}c@{}}0.606\\ ($\pm0.011$)\end{tabular} \\ \midrule


\multirow{1}{*}{\vspace*{1.0cm}\begin{tabular}[c]{@{}c@{}} Real\\(unknown \\ drift) \end{tabular}} & GAS     & \begin{tabular}[c]{@{}c@{}}0.813\\ ($\pm0.011$)\end{tabular} & \begin{tabular}[c]{@{}c@{}}0.514\\ ($\pm0.007$)\end{tabular} & \begin{tabular}[c]{@{}c@{}}0.435\\ ($\pm0.031$)\end{tabular} \\ \cdashline{2-5}

   & RIALTO        & \begin{tabular}[c]{@{}c@{}}0.712\\ ($\pm0.033$)\end{tabular} & \begin{tabular}[c]{@{}c@{}}0.579\\ ($\pm0.006$)\end{tabular} & \begin{tabular}[c]{@{}c@{}}0.539\\ ($\pm0.023$)\end{tabular} \\ \bottomrule
\end{tabular}
\vspace{+0.3cm}
\end{table}

\subsection{Scalability of Compared Algorithms}
\label{apdx:scalability}
Figure \ref{fig:scalability_others} shows the scalability evaluation results of \algname{}$_{RAPP}$ for varying batch size in the seven data sets omitted in Figure \ref{fig:scalability} in the main paper. Note that both x-axis and y-axis are in a log scale. Similarly, the time \algname{} took was less than a second for processing hundreds of high-dimensional data points, and its rate of increase was comparable to that of the baseline for all data sets with different concept drifts. This result, again, demonstrates that \algname{} is scalable with respect to varying input data rates regardless of concept drift types.

\end{document}